\documentclass[10pt]{article} % For LaTeX2e
%\usepackage{tmlr}

% If accepted, instead use the following line for the camera-ready submission:
\usepackage[accepted]{tmlr}
% To de-anonymize and remove mentions to TMLR (for example for posting to preprint servers), instead use the following:
%\usepackage[preprint]{tmlr}

% Optional math commands from https://github.com/goodfeli/dlbook_notation.
 
\usepackage{amsthm}
\usepackage{graphicx}
\usepackage{multirow}
\usepackage{adjustbox}
\usepackage{bm}
\usepackage{bbold}

\usepackage{mathtools,algorithm}
\usepackage[noend]{algpseudocode}

\usepackage{subfig}
\usepackage{wrapfig}
\usepackage{hyperref}
\usepackage{url}

\newtheorem{theorem}{Theorem}
\newtheorem{lemma}{Lemma}

\title{Secure Domain Adaptation with Multiple Sources}

% Authors must not appear in the submitted version. They should be hidden
% as long as the tmlr package is used without the [accepted] or [preprint] options.
% Non-anonymous submissions will be rejected without review.

\author{\name Serban Stan \email sstan@usc.edu \\
      \addr Department of Computer Science\\
      University of Southern California
      \AND
      \name Mohammad Rostami \email rostamim@usc.edu \\
      \addr Department of Computer Science\\
      University of Southern California
      }

% The \author macro works with any number of authors. Use \AND 
% to separate the names and addresses of multiple authors.

  % Insert correct month for camera-ready version
 % Insert correct year for camera-ready version
 % Insert correct link to OpenReview for camera-ready version

\begin{document}

\maketitle

\begin{abstract}
    Multi-source unsupervised domain adaptation (MUDA) is a   framework to address the challenge of annotated data scarcity in a target domain via transferring knowledge from multiple annotated source domains.  When the source domains are distributed, data privacy and security can become significant concerns and protocols may limit data sharing, yet existing MUDA methods overlook these constraints. We develop an algorithm to address MUDA when source domain data cannot be shared with the target or across the source domains.  Our method is based on aligning the distributions of source and target domains indirectly via  estimating the source feature embeddings and predicting over a confidence based combination of domain specific model predictions. We provide  theoretical analysis to support our approach and conduct   empirical experiments to demonstrate that our algorithm is effective.% and compares favorably against existing MUDA methods.
\end{abstract}

\section{Introduction}
\label{sec:introduction}

Advances in deep learning have led to significant increase in the performance  of machine learning (ML) algorithms in many  applications (\cite{russakovsky2015imagenet}). However, deep learning relies on access to large quantities of labeled data for   end-to-end blind training. %Due to the complexity of state-of-the-art neural networks, deep learning struggles in domains where annotated data is scarce. 
Even if a neural model is trained using annotated data, in settings where domain-shift (\cite{torralba2011unbiased}) %i.e., distributional differences in the input space, 
exists between the training domain and deployment domain, neural models have been shown to suffer from sub-optimal performance. In such scenarios, the naive solution is to retrain the models on each domain from scratch. %where the input data distribution differs from that of the original training dataset. 
This operation however necessitates annotating large databases persistently, which has proven to be a time-consuming and expensive   process~\cite{rostami2018crowdsourcing}. Unsupervised Domain Adaptation (UDA) (\cite{long2016unsupervised}) is a   framework developed to address the challenge of domain-shift and permit model generalization between a \textit{source domain} which has access to \textit{labeled data} and a related \textit{target domain} in which only \textit{unannotated data} is accessible. %This is accomplished by learning shared, domain invariant representations of data which allow for neural model generalization between the two domains.

Another recent line of research considers Invariant Risk Minimization (IRM) (\cite{https://doi.org/10.48550/arxiv.1907.02893}), where model generalization on multiple domains is desirable (\cite{ahuja2020invariant}). The key difference compared to the UDA framework is that in UDA minimizing target error is preferred to maintaining joint high performance on both source and target. Thus, UDA methods operate in a more relaxed environment compared to IRM, and allow for better target generalization (\cite{https://doi.org/10.48550/arxiv.2003.00688,pmlr-v130-kamath21a}).

%UDA has been explored using various approaches.
An effective technique to address UDA is to map data points from a source   and a target domain into a shared latent embedding space at which distributions  are aligned~\cite{rostami2021transfer}. %Since domain-shift is mitigated in the latent space, a source-trained classifier receiving latent features as input would successfully generalize on the target domain. 
The latent embedding space is often modeled as the output-space of a deep encoder, trained to produce a shared representation for both domains. This outcome can be achieved using adversarial learning (\cite{hoffman2018cycada,dou2019pnp,tzeng2017adversarial,bousmalis2017unsupervised}), where the distributions are matched indirectly through competing generator and discriminator networks   to learn a domain-agnostic embedding. % such that the generator network extracts domain-agnostic discriminative features from both domains. %The challenge of using adversarial learning is it often requires careful  optimization initialization and is dependant on optimal hyper-parameters.
 Alternatively, a probability metric can be selected and   minimized to align the distributions directly in the embedding  (\cite{chen2019progressive,sun2017correlation,lee2019sliced,rostami2020sequential,stan2020unsupervised}). 

Most existing UDA algorithms consider a single source domain for knowledge transfer.
Recently, single-source unsupervised domain adaptation (SUDA) has been extended to multi-source unsupervised domain adaptation (MUDA), where several distinct sources of knowledge are available  (\cite{peng2019moment,zhao2020multi,rostami2019sar,lin2020multi,guo2020multi,tasar2020standardgan,venkat2021your}). The goal in MUDA is to benefit  from the collective information encoded in several distinct annotated source domains to improve model generalization on an unannotated target domain. Compared to SUDA, MUDA algorithms require leveraging data distribution discrepancies between pairs of source domains, as well as between the sources and the target. Thus, an assumption in most MUDA algorithms is that the annotated source datasets are centrally accessible. Such a premise however ignores privacy/security regulations or   bandwidth limitations that may constrain   the possibility of joint data access between source domains.

%In source-free MUDA, \cite{ahmed2021unsupervised}, this assumption is relaxed but not eliminated, as source datasets are not accessible only during adaptation and the privacy/security constraint is addressed between the source domains and the target domain. Since models are trained on all source domains' data, privacy across the source domains in not addressed.

In practice, it is natural to assume source datasets are distributed amongst independent entities, and sharing data between them may constitute a privacy violation. For example, improving mobile keyboard predictions is performed by securely training models on independent computing nodes without centrally collecting user data (\cite{yang2018applied}). Similarly, in medical image processing applications, data is often distributed amongst different medical institutions. Due to privacy regulations (\cite{9268161}) sharing data can be prohibited, and hence central access to data for all the source domains simultaneously becomes infeasible. MUDA algorithms can offer privacy between the sources and target by operating in a source-free regime (\cite{ahmed2021unsupervised}), i.e., during the adaptation process source samples are considered to be unavailable, and only the source trained model or source data statistics are assumed to be accessible. However, approaches operating under this premise require retraining if new source domains become available, or if a set of source domains becomes inaccessible. This downside leads to increased time and computational resource cost.  

We relax the need for centralized processing of source data in MUDA while maintaining cross-domain privacy. Our approach is robust to accessibility changes for different source domains, allowing for relearning the target decision function without end-to-end retraining. We relax the need for direct access to source domain samples for adaptation by approximating the distribution of source embeddings. We perform source-free adaptation with respect to each source domain, and propose a   confidence based pooling mechanism for target inference. In the present work, we: 
%\begin{enumerate}
 %   \item 
 (i) Address the challenge of data privacy for MUDA by maintaining full privacy between pairs of source domains, and between the sources and the target. 
  %  \item 
  (ii) Propose an efficient distributed optimization process for MUDA to process each dataset locally while encoding high-level learned knowledge in a shared latent embedding space. 
   % \item 
   (ii) Provide theoretical justification for our method by proving that our algorithm  minimizes an upper-bound of the target error. We conduct extensive empirical experimental results on five standard MUDA benchmark datasets to demonstrate the effectiveness of our approach.
%\end{enumerate}  

\section{Related work}
\label{sec:relatedwork}

%Our work is related to several subareas of the domain adaptation literature.

\textbf{Single-Source UDA:} Single source UDA   aims to improve model generalization for an unlabeled target domain using only a single source domain with annotated data. SUDA has been studied extensively. A primary workflow employed in recent UDA works consists of training a deep neural network jointly on the labeled source domain and the unlabeled target domain to achieve distribution alignment between both domains in a latent embedding space. This goal has been achieved by employing generative adversarial networks (\cite{goodfellow2014generative}) to encourage domain alignment  (\cite{hoffman2018cycada,dhouib2020margin,luc2016semantic,tzeng2017adversarial,sankaranarayanan2017generate}) as well as directly minimizing an appropriate distributional distance between the source and target embeddings (\cite{long2015learning,pmlr-v70-long17a,morerio2017minimal}).
  SUDA algorithms do not leverage inter-domain statistics in the presence of several source domains, and thus extending single-source UDA algorithms to a multi-source setting is   nontrivial.

\textbf{Multi-Source UDA:} 
The MUDA setting is a recent extension of SUDA, where  multiple streams of data are concomitantly leveraged for improved target domain generalization. 
\cite{xu2018deep} minimize discrepancy between source and target domains by optimizing an adversarial loss. 
\cite{peng2019moment} adapt on multiple domains by aligning inter-domain statistics of the source domains in an embedding space. \cite{guo2018multi} learn to combine domain specific predictions via meta-learning. \cite{venkat2021your} use pseudo-labels to  improve domain alignment. The increased amount of source data in MUDA is not necessarily an advantage over SUDA, as negative transfer between domains needs to be controlled during adaptation.  \cite{NEURIPS2018_2fd0fd3e} exploit domain similarity to avoid negative transfer by leveraging model statistics in a shared embedding space. \cite{zhu2019aligning} achieve domain alignment by adapting deep networks at various levels of abstraction. \cite{zhao2020multi} align target features against source trained features via optimal transport, then combine source domains proportionally to Wasserstein distance. \cite{pmlr-v119-wen20b} use a discriminator to exclude data samples with negative generalization impact. Such approaches admit joint access to source domains during the training process, making them infeasible in settings where data privacy and security are of concern.

\textbf{Privacy in Domain Adaptation:}
The importance of inter-domain privacy has been recognized and explored for single-source UDA, specifically in source-free adaptation. Note this framework is relevant in many important practical settings even for SUDA, where privacy regulations limit the possibility of sharing data (\cite{peng2019federated,li2020model,liang2020we,liang2021source}). Several UDA approaches consider maintaining an approximatinon of the source domain for adaptation. \cite{kurmi2021domain}) benefit from  GANs to generate source-domain like samples during the adaptation phase.
\cite{yeh2021sofa} align distributions via minimizing the KL-divergence in addition to a variational autoencoder reconstruction loss. Similar to our approach, \cite{9640468} model the source distribution and use clustering of target samples to assign the correct class is done by minimizing an VAE reconstruction error. \cite{9530705} approximate the latent source space during adaptation and use adversarial learning in their approach. \cite{ding2022source} also estimate the source distribution and choose specific anchors to guide the distribution learning process. Adaptation is done by optimizing class conditional maximum mean discrepancy between samples from the learnt approximation distributions and the target samples.
These above works consider only a single source. In multi-source adaptation, the strategy for combining information across source domains is key in achieving competitive performance.
\cite{peng2019federated} perform collaborative adaptation under privacy restrictions between source domains under the framework of federated learning. \cite{ahmed2021unsupervised} approach privacy-preserving MUDA via information maximization and pseudo-labeling. \cite{dong2021confident} choose high confidence target samples as class anchors and pseudo-labels are then assigned according to the closest anchors. Unlike our approach, \cite{ahmed2021unsupervised, dong2021confident} require simultaneous access to all source trained models during adaptation. This makes these method unsuitable for scenarios such as asynchronous federated learning (\cite{peng2019federated}), where communication between individual domains may be broken, or processing different source domains may be done at irregular time intervals (\cite{https://doi.org/10.48550/arxiv.1602.05629}). Our main adaptation tool is represented by optimal transport based optimization, followed by a confidence based pooling mechanism, making the adaptation process considerably more lightweight.

We address a more constrained yet practical setting, where privacy should be preserved both between pairs of source domains and with respect to the target. Our approach allows for efficient distributed optimization, not requiring end-to-end retraining if different source domains become inaccessible due to privacy obligations \cite{https://doi.org/10.48550/arxiv.1602.05629,peng2019federated}, or more source domains become available after initial training has finished, allowing for accumulative learning from several domains. Our method builds on extending the idea of probability metric minimization, explored in UDA (\cite{chen2019progressive,lee2019sliced,stan2021privacy,rostami2021lifelong,rost2021unsupervised}) to MUDA. The latent source and target features are represented via the output space of a neural encoder. Domain alignment implies a shared embedding space for these representations. To achieve this, a suitable distributional distance metric is chosen between these two sets of embeddings and minimized. In this work, we used the Sliced Wasserstein Distance (SWD) (\cite{rabin2011wasserstein,bonneel2015sliced}) for this purpose. SWD is a metric for approximating the optimal transport metric (\cite{redko2019optimal}). It is a suitable choice for UDA because: (i) it possesses non-vanishing gradients for two  distributions with non-overlapping supports. As a result, it is a suitable objective function for deep learning    gradient-based optimization techniques. (ii) It can be computed efficiently based on a closed-form solution using only empirical samples, drawn from the two probability distributions.

\section{Problem formulation}
\label{sec:problemformulation}

\begin{figure*}[!htb]
    \centering
    \includegraphics[width=\textwidth]{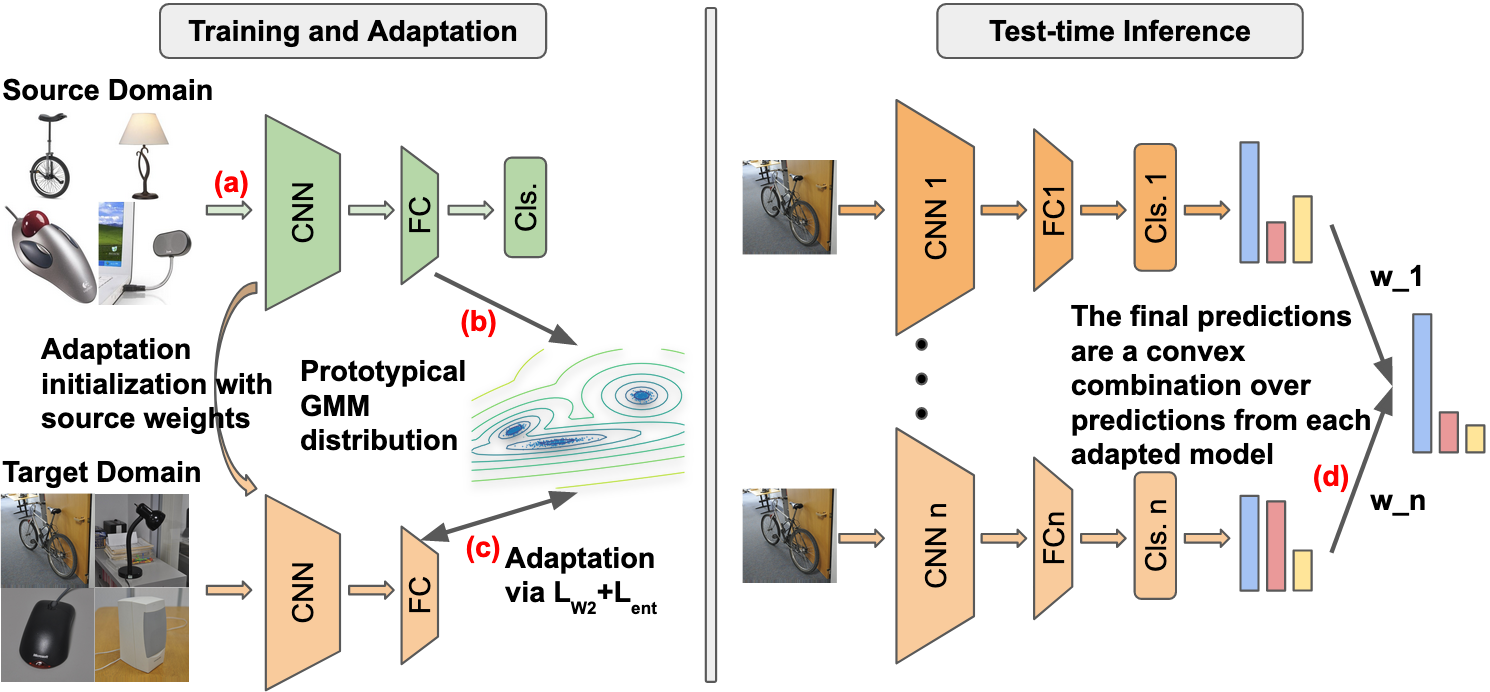}
    \caption{Block-diagram of our proposed approach: (a) source specific model training is done independently for each source domain (b) the distribution of latent embeddings of each source domain is estimated via a mixture of Gaussians, (c) for each source trained model, adaptation is performed by minimizing the distributional discrepancy between the learnt GMM distribution and the target encodings (d) the final target domain predictions are obtained via a learnt convex combinations of logits for each adapted model}
    \label{figure:method-description}
\end{figure*} 

%We describe the problem setting of multi-source unsupervised domain adaptation.
Let $\mathcal{S}_1, \mathcal{S}_2 \dots \mathcal{S}_n$ be the data distributions  of $n$ annotated source domains and $\mathcal{T}$ be the data distribution of an unannotated target domain. Assume the source and target domains share the same feature space $\mathbb{R}^{W\times H \times C}$, where $W,H,C$ describe an image by width, height and number of channels respectively. We consider all domains having a common label-space $\mathcal{Y}$, but not necessarily sharing the same label distribution. For each source domain $k$, we observe the labeled samples $\{(\bm{x}^s_{k,1}, \bm{y}_{k,1}), \ldots, (\bm{x}^s_{k, n^s_k}, \bm{y}_{k, n^s_k})\}$, where $\bm{x}^s_{k}\sim \mathcal{S}_k$. We only observe unlabeled  samples $\{ \bm{x}^t_1, \ldots, \bm{x}^t_{n^t}\}$ from the target domain $\mathcal{T}$. The goal is to train a model $f_\theta : \mathbb{R}^{W\times H \times C} \rightarrow \mathbb{R}^{|\mathcal{Y}|}$ capable of inferring target labels, where $|\mathcal{Y}|$ is the number of inference classes. The first step in our approach is to independently train decision models for each source domain via empirical risk minimization (ERM) by minimizing the cross-entropy loss $\mathcal{L}_{ce}$: $   \theta_k = \arg\min_\theta \frac{1}{n^s_k} \sum_{i=1}^{n^s_k} \mathcal{L}_{ce} (f_\theta(\bm{x}^s_{k, i}), \bm{y}_{k, i})$.
% \begin{equation}
%     \label{eq:ERM-source}
%     \theta_k = \arg\min_\theta \frac{1}{n^s_k} \sum_{i=1}^{n^s_k} \mathcal{L}_{ce} (f_\theta(\bm{x}^s_{k, i}), \bm{y}_{k, i})    
%     % \theta_k = \arg\min_\theta \frac{1}{n^s_k} \sum_{i=1}^{n^s_k} \sum_{j=1}^{|\mathcal{Y}|} \bm{y}_{k, i}
% \end{equation}
Since under our considered setting the target  and source domains share a common input and label space, these models can be directly used on the target to derive a naive solution. However, given the distributional discrepancy between source domains and target, e.g., real world images versus clip art,  generalization performance will be poor. The goal of our MUDA approach is to benefit from the unannotated target dataset and the source-trained models in order to improve upon model generalization while avoiding negative transfer.

To this end, we decompose the model $f_\theta$ into a  feature extractor $g_{\bm{u}}(\cdot):\mathbb{R}^{W \times H \times C} \rightarrow \mathbb{R}^{d_Z}$ and a classifier subnetwork $h_{\bm{v}}(\cdot):\mathbb{R}^{d_Z} \rightarrow \mathbb{R}^{|\mathcal{Y}|}$ with learnable parameters $\bm{u}$ and $\bm{v}$, such that $f(\cdot) = (h \circ g)(\cdot)$.
%Formally, we define $g : \mathbb{R}^{d_1 \times d_2 \times 3} \rightarrow \mathbb{R}^{d_h}$, and $h : \mathbb{R}^{d_h} \rightarrow \mathbb{R}^{|\mathcal{Y}|}$,
We assume input data points are images  of size $W \times H \times C$ and the latent embedding shape is of size $d_Z$. 
%Once we learn the parameters $\theta_k = (\bm{u}_k, \bm{v}_k)$ for each source domain, 
In a SUDA setting, we can improve generalization of each source-specific model on the target domain by aligning the distributions of the source and the target domain in the latent embedding space. Specifically, we can minimize a distributional discrepancy metric $D(\cdot,\cdot)$ across both domains, e.g., SWD loss,  to update the learnable parameters: $\bm{u}^A_k = \arg\min_{\bm{u}} D(g_{\bm{u}}(\mathcal{S}_k), g_{\bm{u}}(\mathcal{T}))$. By aligning the distributions the source trained classifier $h_k$ will generalize well on the target domain $\mathcal{T}$. In the MUDA setting, the goal is improving upon SUDA by benefiting from the collective knowledge of the source domains to make predictions on the target. This can be done via a weighted average of predictions made by each of the domain-specific models, i.e., models with  learnable parameters  $\theta^A_k = (\bm{u}^A_k, \bm{v}_k)$.  For a sample $\bm{x}^t_i$ in the target domain, the model prediction will be $\sum_{k=1}^n w_k f_{\theta^A_k}(X^t_i)$, where $w_k$ denotes a set of learnable weights associated with the source domains. 

We note the above general approach requires simultaneous access to source and target data during adaptation. We relax this constraint and consider the more challenging setting of source-free adaptation, where we lose access to the source domains once source training finishes. To account for applications with sensitive data, e.g. medical domains, we also forbid interaction between source models during adaptation. Hence, the source distributions $S_k$ and their representations in the embedding space, i.e., $g(\mathcal{S}_k)$, will become inaccessible. To circumvent this challenge, we rely on intermediate distributional estimates of the source latent embeddings. 

\section{Proposed algorithm}

Our proposed approach for MUDA with private data is visualized  in Figure \ref{figure:method-description}.
We base our algorithm on two levels of hierarchies. First, we adapt each source-trained model while preserving privacy (left and middle subfigures). We then combine predictions of the source-specific models on the target domain according to their reliability (right subfigure).
To tackle the challenge of data privacy, we approximate the distributions of the source domains in the embedding space as a multi-modal distribution and use these distributional estimates for domain alignment (Figure \ref{figure:method-description}, left).
We can benefit from these estimates because once source training is completed, the input embedding distribution should be mapped into a $|\mathcal{Y}|-$modal distribution to enable the classifier subnetwork to separate the classes. Note,  each separated distributional mode encodes one of the classes (see Figure \ref{figure:method-description}, left). To approximate these internal distributions we employ Gaussian Mixture Models (GMM), with learnable mean and covariance parameters $\mu_{k}, \Sigma_{k}$. Since we have access to labeled source data points, we can learn $\mu_k$ and $\Sigma_k$ in a supervised fashion. Let $\mathbb{1}_c(x)$ denotes the indicator function for $x=c$, then the maximum likelihood estimates for the GMM parameters would be: 

{
\begin{equation}
\small
\begin{split}
    \mu_{k,c} = \frac{\sum_{i=1}^{n^s_k} \mathbb{1}_c(\bm{y}_{k,i}) g_{\bm{u}_k} (\bm{x}^s_{k,i})}{\sum_{i=1}^{n^s_k} \mathbb{1}_c(\bm{y}_{k,i})}, \hspace{4mm}
    \Sigma_{k,c} = \frac{\sum_{i=1}^{n^s_k} \mathbb{1}_c(\bm{y}_{k,i}) (g_{\bm{u}_k} (\bm{x}^s_{k,i}) - \mu_{k,c}) (g_{\bm{u}_k} (\bm{x}^s_{k,i}) - \mu_{k,c})^T}{\sum_{i=1}^{n^s_k} \mathbb{1}_c(\bm{y}_{k,i})}
\end{split}
\label{eq:gmm}
\end{equation}
}

Learning $\mu_k$ and $\Sigma_k$ for each domain $k$ enables us to sample class conditionally from the GMM distributional estimates and approximate the distribution $g(\mathcal{S}_k)$ in the absence of the source dataset. 

We adapt the source-trained model by aligning the target distribution $\mathcal{T}$ and the GMM distribution in the embedding space.
To preserve privacy, for each source domain $k$ we generate intermediate pseudo-domains $A_k$ with pseudo-samples $\{\bm{z}^a_{k,1}, \ldots, \bm{z}^a_{k, n^a_k} \}$ by drawing random samples from the estimated GMM distribution. The pseudo-domain is used as an approximation of the corresponding source embeddings. To align the two distribution, a suitable distance metric $D(\cdot, \cdot)$ needs to be used. We rely on the SWD due to its mentioned appealing properties. The SWD acts as an estimate for the Wasserstein Distance (WD) between two distributions (\cite{rabin2011wasserstein}), by aggregating the tractable $1-$dimensional WD over $L$ projections onto the unit hypersphere. In the context of our algorithm, the SWD discrepancy measure becomes:

\begin{equation}
\small
    \label{eq:w2_align}
    \begin{split}
        D(g(\mathcal{T}), A_k) = &\frac{1}{L} \sum_{l=1}^{L} |\langle g(x^t_{i_l}), \phi_l \rangle - \langle z^a_{k,j_l}, \phi_l \rangle |^2% \approx W_2(g(\mathcal{T}), A_k)    
    \end{split}
\end{equation}

where $\phi_l$ is a projection direction, and $i_l,j_l$ are indices corresponding to the sorted projections. While the source and target domains share the same label space $\mathcal{Y}$ they do not necessarily share the same distribution of labels. Since the prior probabilities on classes are not known in the target domain, minimizing the SWD at the batch level may lead to incorrectly clustering samples from different classes together, depending on the discrepancy between the label distributions. To address this challenge, we take advantage of the conditional entropy loss (\cite{grandvalet2004semi}) as a regularization term based on information maximization. The conditional entropy acts as a soft clustering objective that ensures aligning target samples to the wrong class via SWD will be penalized. We follow the approximation presented in Eq. 6 in \cite{grandvalet2004semi}:

\begin{equation}
    \label{eq:conditional-ent}
    % \mathcal{L}_{ent}(f_{\theta}(\mathcal{T})) = -\frac{1}{n^t} \sum_{i=1}^{n^t} \sum_{j=1}^{|\mathcal{Y}|} \sigma_j(f_{\theta}(x^t_i)) \log f_{\theta}(\sigma_j(x^t_i))
    \mathcal{L}_{ent}(f_{\theta}(\mathcal{T})) = \frac{1}{n^t} \sum_{i=1}^{n^t} \mathcal{L}_{ce} (f_{\theta}(x^t_i), f_{\theta}(x^t_i))
\end{equation}

%where $\sigma$ is the softmax function, and $\sigma_j$ is the $j$'th position in the output. 
To guarantee this added loss term influences the latent representations produced by the feature extractor, the classifier is frozen during model adaptation. 
Our final combined adaptation loss is descrbied as:

\begin{equation}
\small
    \label{eq:total_loss}
    D(g(\mathcal{T}), A) + \gamma \mathcal{L}_{ent}(f_{\theta}(\mathcal{T}))
\end{equation}
 
 for a regularizer $\gamma$. Once the source-specific adaptation is completed across all domains, the final model predictions on the target domain are obtained by combining probabilistic predictions returned by each of the $n$ domain-specific models. The mixing weights are chosen as a convex vector $\bm{w} = (w_1 \dots w_n)$, i.e., $w_i > 0$ and $\sum_{i} w_i = 1$, with final predictions taking the form $\sum_{i=1}^k w_i f_{\theta_i}$. The choice of $w$ is critical, as assigning large weights to a model which does not generalize well will harm inference power. We utilize the source model \textit{prediction confidence} on the target domain as a proxy for generalization performance. We have provided empirical evidence for  this choice in Section \ref{section:experiments}. We thus set a confidence threshold $\lambda$ and assign $w_k$:

\begin{equation}
\small
    \label{eq:choose_w}
    \tilde{w}_k \sim \sum_{i=1}^{n^t} \mathbb{1}(\max {\tilde{f}_{\theta_k}(\bm{x}^t_i}) > \lambda), \hspace{4mm} w_k = \tilde{w}_k/\sum\tilde{w}_k,
\end{equation}
where $\tilde{f}(\cdot)$ denotes the model ouput just prior to the final SoftMax layer which correspond to a probability.

   \begin{wrapfigure}{R}{0.5\textwidth}
   \vspace{-7mm}
   \small
     \begin{minipage}{0.5\textwidth}
\begin{algorithm}[H]
    \caption{Secure Multi-source Unsupervised Domain Adaptation ($\mathrm{SMUDA}$)\label{algorithm}} 
    \begin{algorithmic}[H]
        \Procedure{SMUDA}{$\mathcal{S}_1 \dots \mathcal{S}_n, \mathcal{T}, L, \gamma$}
            \For{$k \gets 1$ to $n$}
            \State $\mu_k, \Sigma_k, \theta_k = Train(\mathcal{S}_k)$
            \State Generate $A_k$ based on $\mu_k, \Sigma_k$
            \State Compute $w_k$ (Eq. \ref{eq:choose_w})
            \State $\theta_k^A = Adapt(\theta_k, A_k, \mathcal{T}, L, \gamma)$
            \EndFor
            \State \textbf{return} $w_1 \dots w_n, \theta_1^A \dots \theta_n^A$
        \EndProcedure
    
        \Procedure{Train}{$\mathcal{S}_i$}
            \State Learn $\theta_k = (u_k, v_k)$ by min.  $\mathcal{L}_{CE}(f_{\theta_k}(\mathcal{S}_k), \cdot)$% (Eq. \ref{eq:ERM-source})
            \State Learn parameters $\mu_k, \Sigma_k$ (Eq. \ref{eq:gmm})
            
            \State \textbf{return} $\mu_k, \Sigma_k, \theta_k$
        \EndProcedure
        
        \Procedure{Adapt}{$\theta_k, A_k, \mathcal{T}, L, \gamma$}
            \State Initialize network with weights $\theta_k$
            \State $\theta_k^A = \arg\min_\theta D(g_{u}(\mathcal{T}), A_k) + \gamma \mathcal{L}_{ent}(f_{\theta}(\mathcal{T}))$ (Eq. \ref{eq:total_loss})
            \State \textbf{return} $\theta_k^A$
        \EndProcedure
    \end{algorithmic}
\end{algorithm}
 \end{minipage}
 \vspace{-5mm}
  \end{wrapfigure}

 Note the only cross-domain information transfer in our framework is communicating the latent means and covariance matrices of the estimated GMMs plus the domain-specific model weights which provide a warm start for adaptation. Data samples are never shared between any two domains during pretraining and adaptation. As a result, our approach preserves data privacy  for scenarios at which the source datasets are distributed across several entities. Additionally, the adaptation process for each source domain is performed independently. As a result, our approach can be used to incorporate new source domains as they become available over time without requiring end-to-end retraining from scratch. We will only require to update the normalized mixing weights via Equation~\ref{eq:choose_w}, which takes negligible runtime compared to model training. Our proposed privacy preserving approach, named Secure MUDA (SMUDA), is presented in Algorithm~\ref{algorithm} .

%{}

\section{Theoretical analysis}
\label{sec:theoretical}
We provide an analysis to demonstrate that our algorithm minimizes an upper bound for the target domain error.
We adopt the framework developed by \cite{redko2017theoretical} for \textit{single source UDA using Wasserstein distance} to provide a theoretical justification for the algorithm we proposed. Our analysis is performed in the latent embedding space.  Let $\mathcal{H}$ represent the hypothesis space of all classifier subnetworks.  Let $h_k(\cdot)$ denote the model learnt by each domain-specific model. We also set $e_\mathcal{D}(\cdot)$, where $\mathcal{D} \in \{ \mathcal{S}_1 \dots \mathcal{S}_n, \mathcal{T} \}$, to be the true expected error returned by some model $h(\cdot)\in\mathcal{H}$ in the hypothesis space on the domain $\mathcal{D}$. Additionally, let  $\hat\mu_{\mathcal{S}_k} = \frac{1}{n_k^s} \sum_{i=1}^{n_k^s} f(g(\bm{x}^s_{k,i}))$, $\hat\mu_{\mathcal{P}_k} = \frac{1}{n^a_k} \sum_{i=1}^{n_k^a} \bm{x}^a_{k,i}$, and $\hat\mu_{\mathcal{T}} = \frac{1}{n^t} \sum_{i=1}^{n_k^s} f(g(\bm{x}^t_i))$ denote the empirical distributions that are built using the samples for the source domain, the intermediate pseudo-domain,  and the target domain in the latent space. Then the following theorem holds for the MUDA setting:

\begin{theorem}{Consider Algorithm \ref{algorithm} for MUDA under the explained conditions, then the following holds}
    \label{theorem:main}
    {\small
    \begin{equation}
        \label{eq:theorem}
        \begin{split}
            e_{\mathcal{T}}(h) \leq \sum_{k=1}^n w_k (e_{\mathcal{S}_k}(h_k) + D(\hat{\mu}_{\mathcal{T}},\hat{\mu}_{\mathcal{P}_k}) + D(\hat{\mu}_{\mathcal{P}_k},\hat{\mu}_{\mathcal{S}_k}) + 
            \sqrt{\big(2\log(\frac{1}{\xi})/\zeta\big)}\big(\sqrt{\frac{1}{N_k}}+\sqrt{\frac{1}{M}}\big) + e_{\mathcal{C}_k}(h_k^*))
        \end{split}
    \end{equation}
    }
    
    where $\mathcal{C}_k$ is the combined error loss with respect to domain $k$, and $h^*_k$ is the optimal model with respect to this loss when a shared model is trained jointly on annotated datasets from all domains simultaneously. 
\end{theorem}

\textit{\textbf{Proof:}}   the complete proof is included in the Appendix.

We note the target domain error is upper bounded by the convex combination of the domain-specific adaptation errors. Algorithm \ref{algorithm} minimizes the right-hand side of Equation \ref{eq:theorem} as follows: for each source domain, the source expected error is minimized by training the models using ERM. The second term is minimized by closing the distributional gap between the intermediate pseudo-domain and the target domain in the latent space. The third term corresponds to how well the GMM distribution approximates the latent source samples. Our algorithm does not directly minimize this term, however if the model forms a multi-modal distribution in the source embedding space, necessary for good performance, this term will be small. The second to last term is dependent on the number of available samples in the adaptation problem, and becomes negligible when sufficient samples are accessible. The final term measures the difficulty of the optimization, and is dependent only on the structure of the data. For related domains, this term will also be negligible. 

\section{Experimental validation}
\label{section:experiments}

\textbf{Datasets} We validate  on five   datasets: \textit{Office-31}, \textit{Office-Home}, \textit{Office-Caltech}, \textit{Image-Clef} and \textit{DomainNet}. 

\textbf{Office-31} (\cite{saenko2010adapting}) is a dataset with $31$ classes consisting of 4110 images from an office environment pertaining to three domains: Amazon, Webcam and DSLR. Domains differ in image quality, background, number of samples and class distributions.   \textbf{Office-Caltech} (\cite{gong2012geodesic}) contains 2533 from $10$ classes of office related images from four domains: Amazon, Webcam, DSLR, Caltech. \textbf{Office-Home} (\cite{venkateswara2017deep}) contains $65$ classes and 30475 from four different domains: Art (stylized images), Clipart (clip art sketches), Product (images with no background) and Real-World (realistic images), making it more challenging that \textit{Office} datasets. \textbf{Image-Clef} (\cite{long2017deep}) contains 1800 images under 12 generic categories sourced from three domains: Caltech, Imagenet and Pascal.  \textbf{DomainNet} (\cite{peng2019moment}) is a larger, more recent dataset containing 586,575 images from 345 general classes, with different class distributions for each of its   domains: Quickdraw, Clipart, Painting, Infograph, Sketch, Real. %Given the size and scope of this dataset, it is arguably the most difficult for the MUDA task.

\textbf{Preprocessing \& Network structure:} we follow the literature for fair comparison. For each domain we re-scale images to a standard size of $(224,224,3)$. We use a ResNet50 (\cite{he2016deep}) network as a backbone for the feature extractor, followed by four fully connected layers. The network classification head consists of a linear layer, and source-training is performed using cross-entropy loss. The ResNet layers of the feature extractor are frozen during adaptation. 
We report classification accuracy, averaged across five runs. As hardware we used a and NVIDIA Titan Xp GPU.  Our code is provided as part of the supplementary material and available online at \href{https://github.com/serbanstan/secure-muda}{https://github.com/serbanstan/secure-muda}. 

To test the effectiveness of our privacy preserving approach for MUDA, we compare our method against state-of-the art SUDA and MUDA approaches. Benchmarks for single best (SB), source combined (SC) and multi source (MS) performance are reported based on DAN (\cite{long2015learning}), D-CORAL (\cite{sun2016deep}), RevGrad (\cite{ganin2014unsupervised}). We include most existing MUDA algorithms: DCTN (\cite{xu2018deep}), FADA (\cite{peng2019federated}), MFSAN (\cite{zhu2019aligning}), MDDA (\cite{zhao2020multi}), SimpAl (\cite{DBLP:conf/nips/VenkatKSRR20}), JAN (\cite{pmlr-v70-long17a}), MEDA (\cite{wang2018visual}), MCD (\cite{saito2018maximum}), M3SDA (\cite{peng2019moment}),  MDAN (\cite{zhao2018adversarial}), MDMN (\cite{NEURIPS2018_2fd0fd3e}), DARN (\cite{pmlr-v119-wen20b}), DECISION (\cite{ahmed2021unsupervised}).
Note that  we maintain full domain privacy throughout training and adaptation. Hence, most of the above works should be considered as \textbf{upperbounds} in performance, as they address a more relaxed problem, by allowing joint and persistent access to source data. While \cite{ahmed2021unsupervised} also performs source-free adaptation, they benefit from jointly accessing the source trained models during adaptation, while our method only assumes joint access when pooling predictions. Despite this additional constraint, results prove our algorithm is competitive and at times outperforming the aforementioned methods. We next present quantitative and qualitative analysis of our work.

\subsection{Performance Results}

Table \ref{table:main-results} presents our main results. For \textbf{Office-31}, we observe state-of-the-art performance (SOTA)  on the $\rightarrow D, \rightarrow A$ tasks and near SOTA performance on the remaining task.
Note that 
the domains \textit{DSLR} and \textit{Webcam} share similar  distributions, as exemplified through the Source-Only results, and for these domains obtaining a good adaptation performance involves minimizing negative transfer, which our method successfully achieves.
In the case of \textbf{Image-clef}, we obtain SOTA performance on all tasks, even though the methods we compare against are not source-free. On the
  \textbf{Office-caltech} dataset, we obtain SOTA performance on the $\rightarrow A$ task, with close to SOTA performance on the three other tasks. 
The domains of the \textbf{Office-home} dataset have larger domain gaps with more classes compared to the three previous datasets. Our approach obtains near SOTA performance on the $\rightarrow P$ and $\rightarrow R$ tasks and competitive performance on the remaining tasks. Finally, the \textbf{DomainNet} dataset contains a much larger number of classes and variation in class distributions compared to the other datasets, making it the most challenging considered task in our experiments. Even so, we are able to obtain SOTA performance on three of the six tasks with competitive results on the other three. 
We reiterate most other MUDA algorithms serve as upper-bounds to our work, as they either access source data directly, simultaneously use models from all sources for adaptation, or both. Results across all tasks demonstrate that not only are we able to compare favorably against  these methods while preserving data privacy, but we also set new SOTA on several tasks.

\begin{table*}[ht]
    \centering
    \subfloat[Office-31]{
        \adjustbox{max width=.29\textwidth}{
            \setlength\tabcolsep{3pt}
            \begin{tabular}{ |c|c|ccc|c| }
                \hline
                \ & \textbf{Method} & \textbf{$\rightarrow$D} & \textbf{$\rightarrow$W} & \textbf{$\rightarrow$A} & \textbf{Avg.} \\
                \hline
                \multirow{4}{1em}{\rotatebox[origin=c]{90}{SB}} 
                    &Source Only &99.3 &96.7 &62.5 &86.2 \\ 
                    &DAN &99.7 &98.0 &65.3 &87.7 \\
                    &D-CORAL &99.7 &98.0 &65.3 &87.7 \\
                    &RevGrad &99.1 &96.9 &66.2 &87.5 \\
                \hline
                \multirow{3}{1em}{\rotatebox[origin=c]{90}{SC}} 
                    &DAN &99.6 &97.8 &67.6 &88.3 \\
                    &D-CORAL &99.3 &98.0 &67.1 &88.1 \\
                    &RevGrad &99.7 &98.1 &67.6 &88.5 \\
                \hline
                \multirow{5}{1em}{\rotatebox[origin=c]{90}{MS}} 
                    &MDDA &99.2 &97.1 &56.2 &84.2 \\
                    &DCTN &99.3 &98.2 &64.2 &87.2 \\
                    &MFSAN &99.5 &98.5 &72.7 &90.2 \\
                    &SImpAl &$99.2$ &$97.4$ &$70.6$ &$89.0$ \\ 
                    \cline{2-6}
                    &DECISION$^*$ &99.6 &98.4 &\textbf{75.4} &91.1 \\
                    &\textbf{SMUDA}$^{*+}$ &\textbf{99.8} &\textbf{98.5} &$\textbf{75.4}$ &$91.2$  \\
                    % &\textbf{SMUDA+pl}$^{*+}$ &$99.5$ &$97.9$ &$\textbf{76.6}$ &$91.3$  \\
                \hline
            \end{tabular}
        }
    } \hspace{-3mm}
    \subfloat[Image-clef]{
        \adjustbox{max width=.334\textwidth}{
            \setlength\tabcolsep{3pt}
            \begin{tabular}{ |c|c|ccc|c| } 
                \hline
                \ & \textbf{Method} & \textbf{$\rightarrow$P} & \textbf{$\rightarrow$C} & \textbf{$\rightarrow$I} & \textbf{Avg.} \\
                \hline
                \multirow{4}{1em}{\rotatebox[origin=c]{90}{SB}} 
                    &Source Only &74.8 &91.5 &83.9 &83.4 \\
                    &DAN &75.0 &93.3 &86.2 &84.8 \\
                    &D-CORAL &76.9 &93.6 &88.5 &86.3 \\ 
                    &RevGrad &75.0 &96.2 &87.0 &86.1 \\
                \hline
                \multirow{3}{1em}{\rotatebox[origin=c]{90}{SC}} 
                    &DAN &77.6 &93.3 &92.2 &87.7 \\
                    &D-CORAL &77.1 &93.6 &91.7 &87.5 \\ 
                    &RevGrad &77.9 &93.7 &91.8 &87.8 \\
                \hline
                \multirow{5}{1em}{\rotatebox[origin=c]{90}{MS}} 
                    &DCTN &75.0 &95.7 &90.3 &87.0 \\
                    &MFSAN &79.1 &95.4 &93.6 &89.4 \\
                    &SImpAl &$77.5$ &$93.3$ &$91.0$ &$87.3$ \\
                    \cline{2-6}
                    &\textbf{SMUDA}$^{*+}$ &$\textbf{79.4}$ &$\textbf{96.9}$ &$\textbf{93.9}$ &$90.1$ \\
                    % &\textbf{SMUDA (ours)+pl}$^{*+}$ &$\textbf{78.3}$ &$\textbf{95.6}$ &$\textbf{90.6}$ &$88.2$ \\
                \hline
            \end{tabular}
        }
    } \hspace{-3mm}
    \subfloat[Office-caltech]{
        \adjustbox{max width=.355\textwidth}{
            \setlength\tabcolsep{3pt}
            \begin{tabular}{ |c|c|cccc|c| } 
                \hline
                \ & \textbf{Method} & \textbf{$\rightarrow$W} & \textbf{$\rightarrow$D} & \textbf{$\rightarrow$C} & \textbf{$\rightarrow$ A} & \textbf{Avg.} \\
                \hline
                \multirow{3}{1em}{\rotatebox[origin=c]{90}{SB}} 
                    &Source Only &99.0 &98.3 &87.8 &86.1 &92.8 \\
                    &DAN &99.3 &98.2 &89.7 &94.8 &95.5 \\
                    &JAN &99.4 &99.4 &91.2 &91.8 &95.5 \\
                \hline
                \multirow{7}{1em}{\rotatebox[origin=c]{90}{MS}} 
                    &DAN &99.5 &99.1 &89.2 &91.6 &94.8 \\
                    &DCTN &99.4 &99.0 &90.2 &91.6 &94.8 \\
                    &MEDA &99.3 &99.2 &91.4 &92.9 &95.7 \\
                    &MCD &99.5 &99.1 &91.5 &92.1 &95.6 \\
                    &M$^3$SDA &99.4 &99.2 &91.5 &94.1 &96.1 \\
                    &SImpAl &$99.3$ &$99.8$ &$92.2$ &$95.3$ &$96.7$ \\
                    \cline{2-7}
                    &FADA$^{*+}$ &88.1 &87.1 &88.7 &84.2 &87.1 \\
                    &DECISION$^*$ &\textbf{99.6} &\textbf{100} &\textbf{95.9} &\textbf{95.9} &98.0 \\
                    &\textbf{SMUDA}$^{*+}$ &$99.3$ &$97.6$ &$93.9$ &$\textbf{95.9}$ &$96.6$\\ 
                    % &\textbf{SMUDA (ours)+pl}$^{*}$ &$99.5$ &$98.1$ &$94.6$ &$\textbf{95.9}$ &$97$\\ 
                \hline
            \end{tabular}
        }
    }
    \\
    \subfloat[Office-home]{
        \adjustbox{max width=.36\textwidth}{
            \setlength\tabcolsep{3pt}
            \begin{tabular}{ |c|c|cccc|c| } 
                \hline
                \ & \textbf{Method} & \textbf{$\rightarrow$A} & \textbf{$\rightarrow$C} & \textbf{$\rightarrow$P} & \textbf{$\rightarrow$ R} & \textbf{Avg.} \\
                \hline
                \multirow{4}{1em}{\rotatebox[origin=c]{90}{SB}} 
                    &Source Only &65.3 &49.6 &79.7 &75.4 &67.5 \\
                    &DAN &68.2 &56.5 &80.3 &75.9 &70.2 \\
                    &D-CORAL &67.0 &53.6 &80.3 &76.3 &69.3 \\
                    &RevGrad &67.9 &55.9 &80.4 &75.8 &70.0 \\
                \hline
                \multirow{3}{1em}{\rotatebox[origin=c]{90}{SC}} 
                    &DAN &68.5 &59.4 &79.0 &82.5 &72.4 \\
                    &D-CORAL &68.1 &58.6 &79.5 &82.7 &72.2 \\
                    &RevGrad &68.4 &59.1 &79.5 &82.7 &72.4 \\ 
                \hline
                \multirow{5}{1em}{\rotatebox[origin=c]{90}{MS}} 
                    &MFSAN &72.1 &62.0 &80.3 &81.8 &74.1 \\
                    &M$^3$SDA &$64.1$ &$62.8$ &$76.2$ &$78.6$ &$70.4$ \\
                    &SImpAl &$70.8$ &$56.3$ &$80.2$ &$81.5$ &$72.2$ \\ 
                    &MDAN &$68.1$ &$67.0$ &$81.0$ &$82.8$ &$74.8$ \\ 
                    &MDMN &$68.7$ &$67.6$ &$81.4$ &$83.3$ &$75.3$ \\
                    &DARN &$70.0$ &$68.4$ &$82.8$ &$83.9$ &76.2 \\
                    \cline{2-7}
                    &DECISION$^*$ &\textbf{74.5} &59.4 &\textbf{84.4} &\textbf{83.6} &75.5 \\
                    &\textbf{SMUDA}$^{*+}$ &$69.1$ &$\textbf{61.5}$ &$83.5$ &$83.4$ &$74.4$ \\
                    % &\textbf{SMUDA (ours)+pl}$^{*+}$ &$68.2$ &$\textbf{61}$ &$83.3$ &$83.4$ &$74$ \\
                \hline
            \end{tabular}
        }
    } \hspace{-3mm}
    \subfloat[DomainNet]{
        \adjustbox{max width=.453\textwidth}{
            \setlength\tabcolsep{3pt}
            \begin{tabular}{ |c|c|cccccc|c| }
                \hline
                \ &\textbf{Method} &\textbf{$\rightarrow$ Q} & \textbf{$\rightarrow$ C} &\textbf{$\rightarrow$ P} &\textbf{$\rightarrow$ I} &\textbf{$\rightarrow$ S} &\textbf{$\rightarrow$ R} &\textbf{Avg.} \\
                \hline
                \multirow{6}{1em}{\rotatebox[origin=c]{90}{SB}}
                &Source Only &11.8 &39.6 &33.9 &8.2 &23.1 &41.6 &26.4 \\
                &DAN &16.2 &39.1 &33.3 &11.4 &29.7 &42.1 &28.6 \\
                &JAN &14.3 &35.3 &32.5 &9.1 &25.7 &43.1 &26.7 \\
                &ADDA &14.9 &39.5 &29.1 &14.5 &30.7 &41.9 &28.4 \\
                &MCD &3.8 &42.6 &42.6 &19.6 &33.8 &50.5 &32.2 \\
                \hline
                \multirow{6}{1em}{\rotatebox[origin=c]{90}{SC}}
                &Source Only &13.3 &47.6 &38.1 &13.0 &33.7 &51.9 &32.9 \\
                &DAN &15.3 &45.4 &36.2 &12.8 &34.0 &48.6 &32.1 \\
                &JAN &12.1 &40.9 &35.4 &11.1 &32.3 &45.8 &29.6 \\
                &ADDA &14.7 &47.5 &36.7 &11.4 &33.5 &49.1 &32.2 \\
                &MCD &7.6 &54.3 &45.7 &22.1 &43.5 &58.4 &38.5 \\
                \hline
                \multirow{7}{1em}{\rotatebox[origin=c]{90}{MS}}
                &DCTN &7.2 &48.6 &48.8 &23.5 &47.3 &53.5 &38.2 \\
                &M$^3$SDA-$\beta$ &6.3 &58.6 &52.3 &26.0 &49.5 &62.7 &42.6 \\
                \cline{2-9}
                &FADA$^{*+}$ &7.9 &45.3 &38.9 &16.3 &26.8 &46.7 &30.3 \\
                &DECISION$^{*}$ &\textbf{18.9} &61.5 &\textbf{54.6} &21.6 &\textbf{51} &67.5 &45.9 \\
                &\textbf{SMUDA}$^{*+}$ &14.6 &\textbf{62.4} &53.6 &\textbf{24.4} &49.9 &\textbf{68.3} &45.5 \\
                \hline
            \end{tabular}
        }
    }
    \caption{Results on five benchmark datasets. Single best (SB) represents the best performance with respect to any source, source combined (SC) represents performance obtained by pooling the source data together from different domains, and multi source (MS) represents methods performing multi source adaptation. $^*$ indicates source-free adaptation, guaranteeing privacy between sources and the target. $^+$ indicates privacy between source models. Results in bold correspond to the highest accuracy amongst the source-free approaches. }
    \label{table:main-results}
\end{table*}

%\subsection{Ablative Studies}

\subsection{Ablative Experiments and Empirical Analysis}

We perform ablative experiments by investigating the effect of each loss term in Eq.~\ref{eq:total_loss} on performance, and present results in Table \ref{table:loss-variants}. We observe combining the two terms yields improved performance for all datasets besides \textit{Office-caltech}, where the difference is negligible. On the other hand, minimizing the SWD is more impactful on the \textit{Image-clef} and \textit{Office-home} datasets. The conditional entropy contributes more for \textit{Office-caltech} and some \textit{Office-31} tasks. Our insight is conditional entropy is more impactful when the source trained models have higher initial performance on the target (e.g., $\rightarrow D, \rightarrow W$ on \textit{Office-31}), while the SWD term is more beneficial when there is a larger discrepancy between the source domains and the target domain (e.g., $\rightarrow A$ on \textit{Office-31}). Experiments conclude using both terms further improves performance.

\begin{table} 
    \centering
    \subfloat[Office-31]{
        \scalebox{.85}{
            \begin{tabular}{ |c|ccc|c| }
                \hline
                \textbf{Method} & \textbf{$\rightarrow$D} & \textbf{$\rightarrow$W} & \textbf{$\rightarrow$A} & \textbf{Avg.} \\
                \hline
                SWD only &92.2 &94.1 &73.1 &86.4 \\
                $\mathcal{L}_{ent}$ only &99.8 &98.1 &66.2 &88.1 \\
                SMUDA &99.8 &98.5 &75.4 &91.2 \\
                \hline
            \end{tabular}
        }
    }
    \subfloat[Office-caltech]{
        \scalebox{.85}{
            \begin{tabular}{ |c|cccc|c| } 
                \hline
                \textbf{Method} & \textbf{$\rightarrow$W} & \textbf{$\rightarrow$D} & \textbf{$\rightarrow$C} & \textbf{$\rightarrow$ A} & \textbf{Avg.} \\
                \hline
                SWD only &98.1 &97.8 &92.1 &95.5 &95.9 \\
                $\mathcal{L}_{ent}$ only &99.4 &97.7 &94 &96 &96.8\\ 
                SMUDA &$99.3$ &$97.6$ &$93.9$ &$95.9$ &$96.6$\\ 
                \hline
            \end{tabular}
        }
    }
    \\
    \subfloat[Image-clef]{
        \scalebox{.85}{
            \begin{tabular}{ |c|ccc|c| } 
                \hline
                \textbf{Method} & \textbf{$\rightarrow$P} & \textbf{$\rightarrow$C} & \textbf{$\rightarrow$I} & \textbf{Avg.} \\
                \hline
                SWD only &79.3 &96.5 &94.2 &90 \\
                $\mathcal{L}_{ent}$ only &78.5 &96 &91.7 &88.7 \\
                SMUDA &79.4 &96.9 &93.9 &90.1 \\
                \hline
            \end{tabular}
        }
    }
    \subfloat[Office-home]{
        \scalebox{.85}{
            \begin{tabular}{ |c|cccc|c| } 
                \hline
                \textbf{Method} & \textbf{$\rightarrow$A} & \textbf{$\rightarrow$C} & \textbf{$\rightarrow$P} & \textbf{$\rightarrow$ R} & \textbf{Avg.} \\
                \hline
                SWD only &$66.6$ &$59.1$ &$80.9$ &$82.2$ &$72.2$ \\
                $\mathcal{L}_{ent}$ only &$64.5$ &$49.4$ &$77.8$ &$72.2$ &$66$ \\
                SMUDA &$69.1$ &$61.5$ &$83.5$ &$83.4$ &$74.4$ \\
                \hline
            \end{tabular}
        }
    }
    \caption{Results when only the SWD objective, the entropy objective or both (SMUDA) are used.} %Using both terms of the loss function (Eq. \ref{eq:total_loss}) leads to best performance.}
    \label{table:loss-variants}
\end{table}

Preserving privacy when performing adaptation limits information access between domains. It is then expected that privacy preserving methods are at a disadvantage compared to methods that do not impose any privacy restrictions. We show that in the case of our method, this observed disadvantage is negligible. To study the effect of preserving privacy on UDA performance, we perform experiments where source data is shared either between sources or with the target domain. We consider three primary scenarios for sharing source data: (i) \textbf{SW}D loss is computed using the source domain latent features (SW); (ii)  \textbf{S}ource domains' data is \textbf{C}ombined into a single source (SC); \textbf{S}upervised \textbf{S}ource loss is computed for joint UDA (SS). We report results for natural combinations of these approaches in Table \ref{table:share-source-data}. We observe SMUDA performs similarly to SW. Joint adaptation (SS), source-combined performance (SC), or a combination of the two offer improved performance on all datasets. The improvements from sacrificing privacy are however negligible compared to SMUDA. We conclude our source domain approximation using GMMs captures sufficient source information under the considered test cases. This leads our adaptation approach to achieve comparable performance to settings where privacy is not enforced, with the added benefit of not sharing data between domains.

\begin{table*}[!htb]
    \centering
    \subfloat[Office-31]{
        \adjustbox{valign=t, max width=.37\textwidth}{
            \begin{tabular}{ |c|ccc|c| }
                \hline
                \textbf{Method} & \textbf{$\rightarrow$D} & \textbf{$\rightarrow$W} & \textbf{$\rightarrow$A} & \textbf{Avg.} \\
                \hline
                SW &99.5 &98.5 &75.5 &91.2  \\
                SC &98.1 &96.8 &76 &90.3 \\
                SS &\textbf{99.8} &\textbf{98.7} &\textbf{76.3} &\textbf{91.6} \\
                SW+SS &\textbf{99.8} &98.6 &75.7 &91.3 \\
                SC+SS &98.4 &96.9 &76.1 &90.5 \\
                SC+SW+SS &99.0 &97.7 &76.1 &90.9 \\
                SMUDA &\textbf{99.8} &98.5 &$75.4$ &$91.2$  \\
                \hline
            \end{tabular}
        }
    }
    \subfloat[Office-caltech]{
        \adjustbox{valign=t, max width=.43\textwidth}{
            \begin{tabular}{ |c|cccc|c| } 
                    \hline
                    \textbf{Method} & \textbf{$\rightarrow$W} & \textbf{$\rightarrow$D} & \textbf{$\rightarrow$C} & \textbf{$\rightarrow$ A} & \textbf{Avg.} \\
                    \hline
                    
                    SW &99.4 &96.9 &93.9 &95.9 &96.5 \\
                    SC &\textbf{99.7} &96.8 &94.1 &96 &96.6 \\
                    SS &99.6 &97.2 &94.1 &95.9 &\textbf{96.7} \\
                    SW+SS &\textbf{99.7} &97.4 &94.1 &96 &96.8 \\
                    SC+SS &\textbf{99.7} &96.8 &\textbf{94.2} &\textbf{96.2} &\textbf{96.7} \\
                    SC+SW+SS &99.6 &97.2 &93.3 &95.9 &96.5 \\
                    SMUDA &99.3 &\textbf{97.6} &93.9 &95.9 &96.6 \\
                    
                    \hline
                \end{tabular}
        }
    }
    \\
    \subfloat[Image-clef]{
        \adjustbox{valign=t, max width=.37\textwidth}{
            \begin{tabular}{ |c|ccc|c| }
                \hline
                \textbf{Method} & \textbf{$\rightarrow$P} & \textbf{$\rightarrow$C} & \textbf{$\rightarrow$I} & \textbf{Avg.} \\
                \hline
                SW &79.5 &95.2 &91.3 &88.6  \\
                SC &79.8 &96.6 &\textbf{94.2} &\textbf{90.2} \\
                SS &79.4 &95.6 &91.6 &88.9 \\
                SW+SS &79.4 &95.6 &91.8 &88.9  \\
                SC+SS &\textbf{79.9} &96.6 &93.1 &89.8 \\
                SC+SW+SS &79.5 &95.2 &91.3 &88.6 \\
                SMUDA &79.4 &\textbf{96.9} &93.9 &90.1  \\
                \hline
            \end{tabular}
        }
    } 
    \subfloat[Office-home]{
            \adjustbox{valign=t, max width=.425\textwidth}{
                \begin{tabular}{ |c|cccc|c| } 
                    \hline
                    \textbf{Method} & \textbf{$\rightarrow$A} & \textbf{$\rightarrow$C} & \textbf{$\rightarrow$P} & \textbf{$\rightarrow$ R} & \textbf{Avg.} \\
                    \hline
                    
                    SW &68.9 &60.8 &83.3 &83.4 &74.1 \\
                    SC &69.6 &62.9 &85.3 &84.7 &75.6 \\
                    SS &68.9 &61.1 &84 &83.6 &74.4 \\
                    SW+SS &68.5 &61 &83.8 &83.8 &74.3 \\
                    SC+SS &\textbf{69.4} &\textbf{63} &\textbf{85.5} &\textbf{85} &\textbf{75.7} \\
                    SC+SW+SS &68.8 &62.7 &85.1 &84.5 &75.3 \\
                    SMUDA &69.2 &61.1 &83.2 &83.5 &74.3 \\
                    
                    \hline
                \end{tabular}
            }
        }
    \caption{Results comparing SMUDA to non-private variants.}
    \label{table:share-source-data}
\end{table*}

 To compare our method against ensemble models of existing single-source source-free UDA (SFUDA) methods, we performed experiments on the Office-31 dataset. We compare against five recent SFUDA approaches in Table \ref{table:single-source}, and the ensemble of these methods  in Table \ref{table:uniform}. We observe that although in the SFUDA setting, despite being competitive, our method trails some of the methods, it outperforms the methods in MUDA. We conclude that our method alleviates the effect of negative transfer successfully and indeed can boost performance of a weaker single-source performance. We also note that we likely can improve the SFUDA performance for our method if we benefit from better probability metrics or model regularization.

    \begin{table*}[ht]
    \small
        \centering
        \adjustbox{valign=t, max width=.7\textwidth}{
            \begin{tabular}{ |c|cccccc|c| }
                \hline
                \textbf{Method} & \textbf{A$\rightarrow$D} &
                \textbf{W$\rightarrow$D} & \textbf{A$\rightarrow$W} & \textbf{D$\rightarrow$W} & \textbf{D$\rightarrow$A} & \textbf{W$\rightarrow$A} &\textbf{Avg.} \\
                \hline
                % SOFA \cite{yeh2021sofa} &53.7 &54.6 &73.9 &98.2 &71.7 &96.7 &74.8 \\
                % VDAM-DA \cite{9530705} &75.8 &77.1 &93.2 &100 &94.1 &98.0 &89.7 \\
                USFDA \cite{kundu_cvpr_2020} &64.5 &96 &71 &93.3 &62.8 &63.6 &75.2 \\
                SHOT \cite{pmlr-v119-liang20a} &94.0 &99.9 &90.1 &98.4 &\textbf{74.7} &74.3 &88.6 \\
                AFN \cite{xu2019larger} &90.7 &99.8 &90.1 &98.6 &73.0 &70.2 &87.1 \\
                MDD \cite{MDD_ICML_19} &93.5 &\textbf{100} &94.5 &98.4 &74.6 &72.2 &88.9 \\ 
                GVB-GD \cite{cui2020gradually} &\textbf{95.0} &\textbf{100} &\textbf{94.8} &\textbf{98.7} &73.4 &73.7 &\textbf{89.3} \\
                \textbf{SMUDA (ours)} &92.7 &99.8 &87.9 &98.5 &72.1 &\textbf{75.4} &87.7  \\
                \hline
            \end{tabular}
        }
        \caption{Single source results}
        \label{table:single-source}
    \end{table*}
    
    \begin{table*}[ht]
   \small
        \centering
        \adjustbox{valign=t, max width=.5\textwidth}{
            \begin{tabular}{ |c|ccc|c| }
                \hline
                \textbf{Method} & \textbf{$\rightarrow$D} & \textbf{$\rightarrow$W} & \textbf{$\rightarrow$A} & \textbf{Avg.} \\
                \hline
                USFDA \cite{kundu_cvpr_2020} &96.0 &93.1 &65.5 &84.9  \\
                SHOT-ens \cite{ahmed2021unsupervised} &97.8 &94.9 &75.0 &89.3  \\
                AFN \cite{xu2019larger} &98.4 &96.4 &71.3 &88.7  \\
                MDD \cite{MDD_ICML_19} &95.4 &99.3 &74.1 &89.6 \\
                GVB-GD \cite{cui2020gradually} &97.2 &95.6 &74.9 &89.2 \\
                \textbf{SMUDA-Uniform} &94.2 &74.8 &75.4 &86.4  \\
                \textbf{SMUDA (ours)} &\textbf{99.8} &\textbf{98.5} &\textbf{75.4} &\textbf{91.2}  \\
                \hline
            \end{tabular}
        }
        \caption{Uniformly combined predictions}
        \label{table:uniform}
    \end{table*}

We study the effect of hyper-parameters on SMUDA performance. We first empirically validate our approach for computing the mixing parameters $w_k$. We consider four scenarios for combining model predictions: (i) Eq.~\ref{eq:choose_w}, (ii) setting weights proportional to SWD between the intermediate and the target domains (a cross-domain measure of distributional similarity), (iii) using a uniform average, and (iv) assigning all mixing weight to the model with best target performance. Average performance for tasks from four of the datasets are reported in Table \ref{table:compare-performance}. We observe our choice leads to maximum performance. Single best performance is able to slightly outperform on one dataset, however suffers on tasks where significant pairwise domain gap exists. This is expected, as using several domains is beneficial when they complement each other in terms of available information. Assigning weights proportional to $D(g(\mathcal{T}), A_k)$ may seem intuitive, given that similarity between pseudo-datasets and target latent features indicates better classifier generalization. However, this method performs better only compared to uniform averaging. We conclude model reliability is a superior criterion of combining predictions. Uniform averaging leads to decreased generalization on the target domain as it treats all domains equally. As a result, models with the least generalization ability on the target domain harm collective performance.

%  \begin{wraptable}{R}{0.55\textwidth}
\begin{table}[!ht]
    \centering
    % \begin{adjustbox}{center}
    \scalebox{.85}{
        \begin{tabular}{ |c|cccc| }
            \hline
            \textbf{Dataset} & \textbf{High confidence} & \textbf{W2} &\textbf{Uniform} &\textbf{Single Best} \\
            \hline
            Office-31 &91.2 &88.6 &85.1 &91.2 \\
            Image-clef &90.1 &89.6 &89.8 &89.6 \\
            Office-caltech &96.6 &96.6 &96.6 &97 \\
            Office-home &74.4 &74.2 &74.2 &72.8 \\
            \hline
            Total avg. &\textbf{88.1} &87.2 &86.4 &87.6 \\
            \hline
        \end{tabular}
    }
    % \end{adjustbox}
    \caption{Analytic experiments to study four strategies for combining the individual  model predictions. Mixing based on model reliability proves superior to other popular approaches.}
    \label{table:compare-performance}
\end{table}
%  \end{wraptable}

\begin{figure*}[!ht]
    \vspace{-4mm}
    \centering
    \includegraphics[width=.31\textwidth]{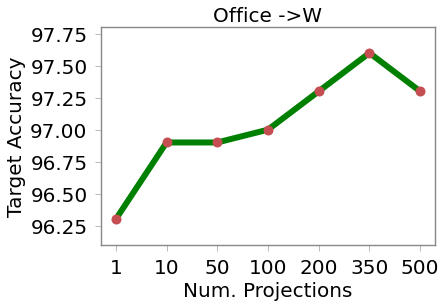}
    \includegraphics[width=.3\textwidth]{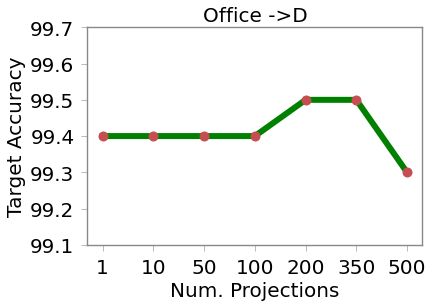}
    \includegraphics[width=.29\textwidth]{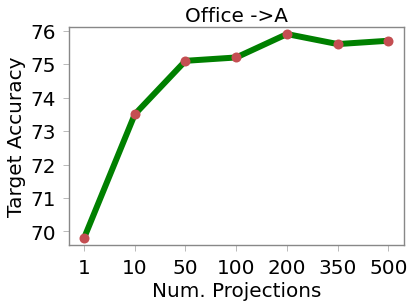}
    \caption{Performance for different numbers of latent projections used in the SWD on Office-31. }
    \label{figure:w2-projection-analysis}
\end{figure*}

We additionally study the effect of the SWD projection hyper-parameter. SWD utilizes $L$ random projections, as detailed in Equation \ref{eq:w2_align}. While a large $L$ leads to a tighter approximation of the optimal transport metric, it also incurs a computational resource penalty. We investigate whether there is a range of $L$ values offering sufficient adaptation performance, and analyze the impact of this parameter using the \textit{Office-31} dataset. In Figure \ref{figure:w2-projection-analysis} we reported performance results for $L\in\{1, 10, 50, 100, 200, 350, 500\}$. The SWD approximation becomes tighter with an increased number of projections, which we see translating on all three tasks. We also note that above a certain threshold, i.e. $L \approx 200$, the gains in performance from increasing $L$ are minimal.

\begin{figure*}[!ht]
    \vspace{-4mm}
    \centering
    % \subfloat{
    %     \includegraphics[width=\textwidth]{}
    % }
    \subfloat{
        \includegraphics[width=.28\textwidth]{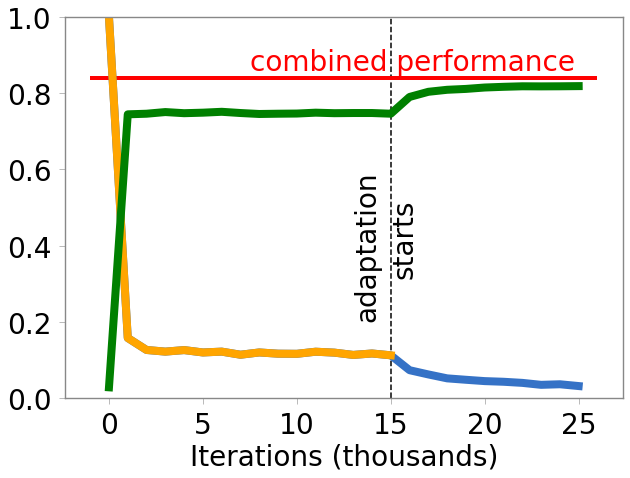}
    }
    \subfloat{
        \includegraphics[width=.28\textwidth]{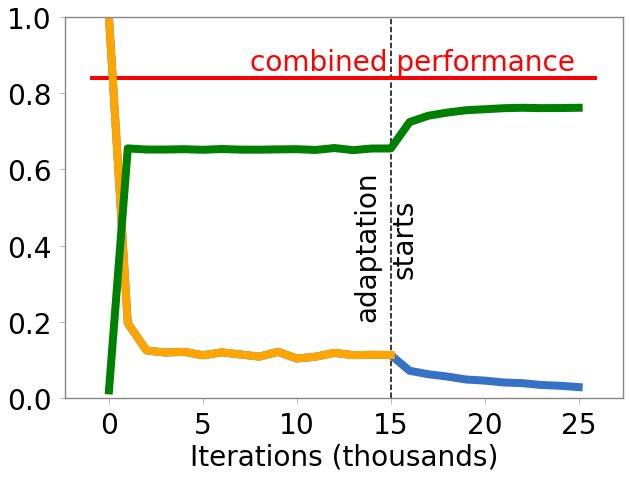}
    }
    \subfloat{
        \includegraphics[width=.385\textwidth]{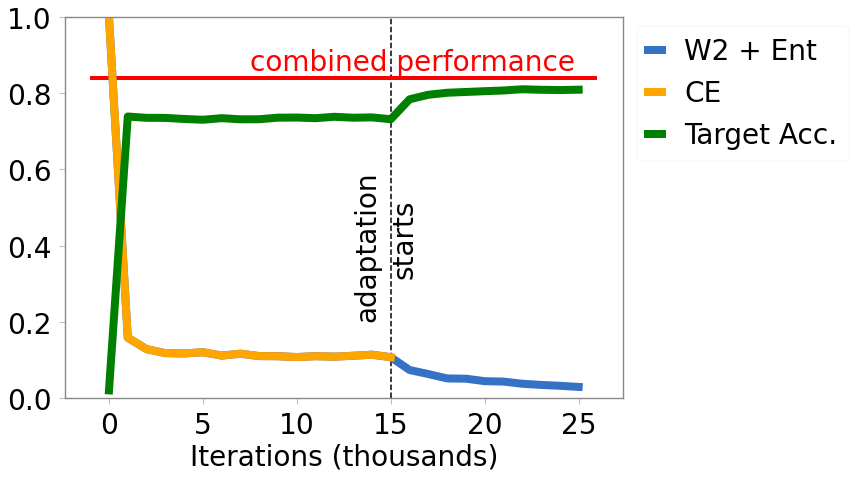}
    }
    \caption{Effect of the adaptation process on the \textit{Office-home} dataset: from left to right, we consider \textit{Art, Clipart} and \textit{Product} as the source domains,  and \textit{Real World} as the target domain.}
    \label{figure:office-home-accuracy}
    \vspace{-3mm}
\end{figure*}

In Figure \ref{figure:office-home-accuracy} we explore the behavior of our adaptation strategy with respect to a \textit{Office-home} task. For each of the three source domains, we observe an increase in target accuracy once adaptation starts, which is in line with our previous results. Note this increase in target accuracy also correlates with the minimization of the SWD and entropy losses. We additionally note that the combined multi-source performance using all three source domains outperforms the three SUDA performances. The biggest difference is observed for the \textit{Clipart} trained model, which exhibits the highest discrepancy from the target domain \textit{Real World}.

\begin{figure*}[!ht]
    \centering
    \subfloat{
        \includegraphics[width=.3\textwidth]{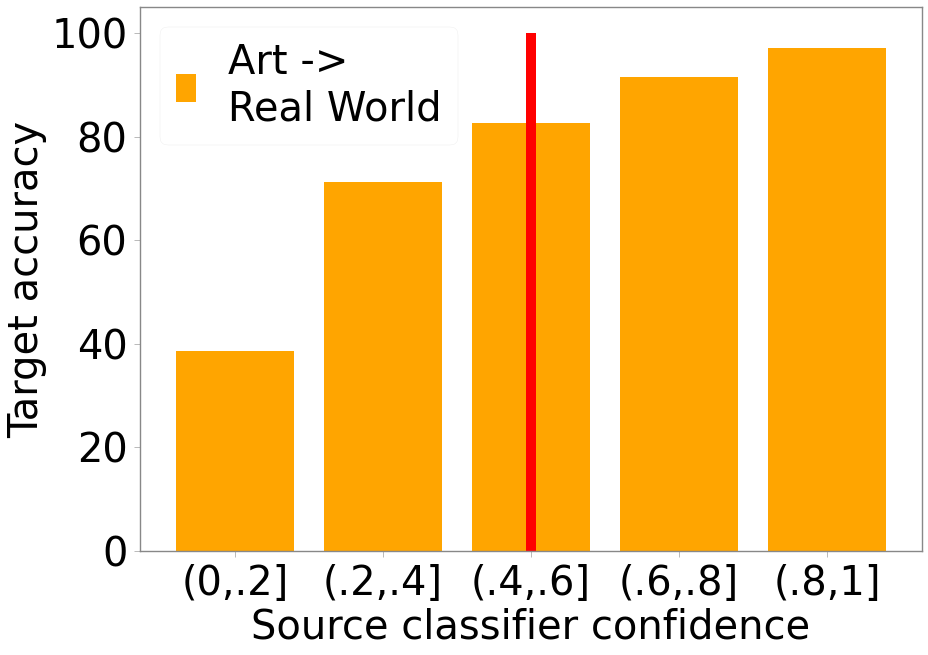}
    }
    \subfloat{
        \includegraphics[width=.3\textwidth]{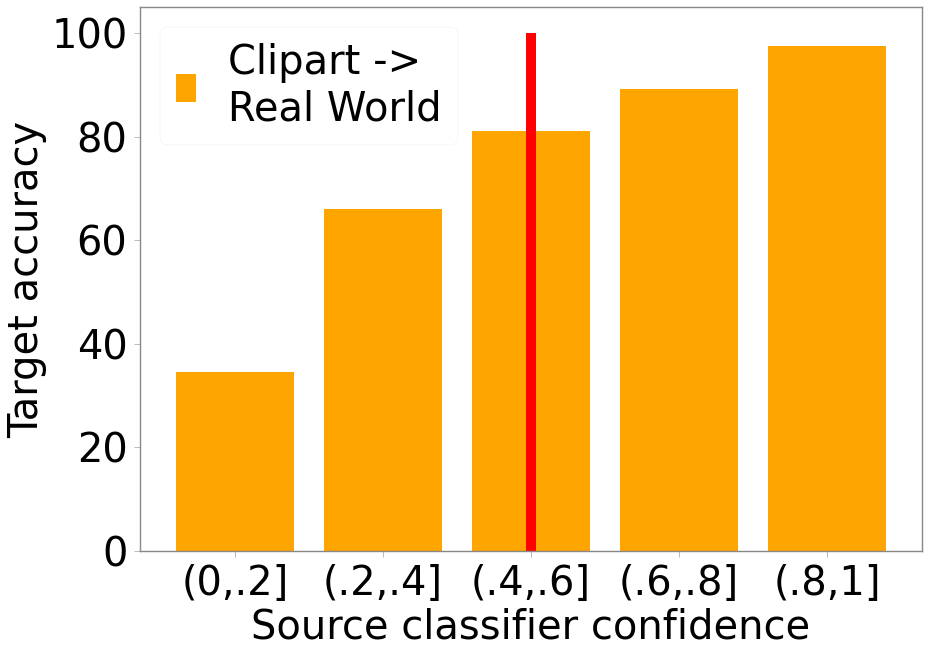}
    }
    \subfloat{
        \includegraphics[width=.3\textwidth]{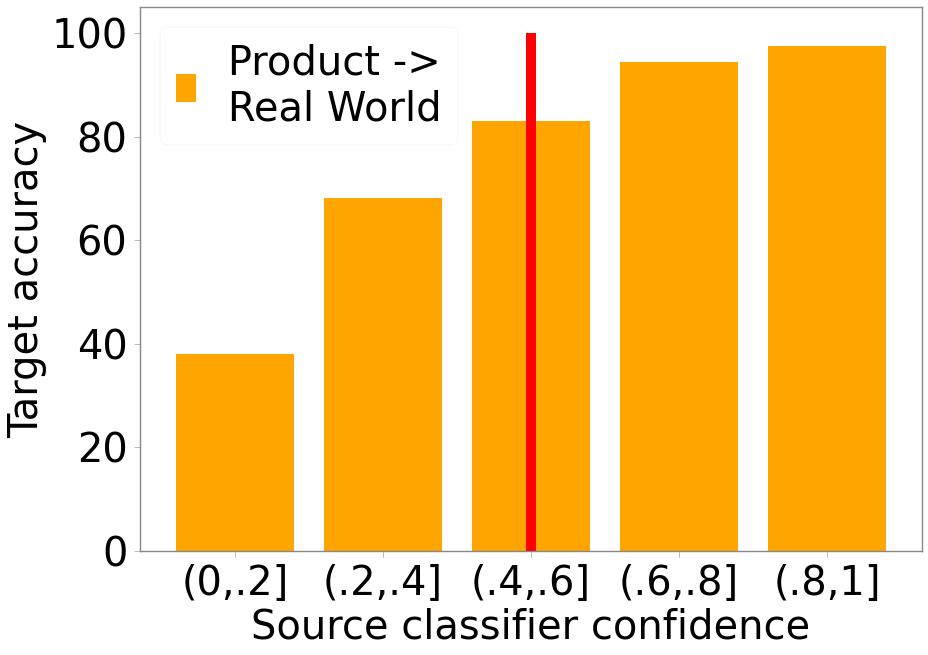}
    }
    \caption{Prediction accuracy on \textit{Office-home} target domain tasks under different levels of source model confidence, and our choice of $\lambda$. Target predictions above this threshold attain high accuracy.}
    \label{figure:confidence-accuracy}
    % \vspace{-2mm}
\end{figure*}

The confidence threshold $\lambda$ controls the assignment of mixing weights $w_k$. For each source domain, the number of target samples with confidence greater than $\lambda$ is recorded, and these normalized values produce $w_k$. In order to determine whether a certain value of $\lambda$ leads to a satisfactory choice of mixing weights, it is important to determine whether the high confidence samples are indeed correctly predicted. Figure \ref{figure:confidence-accuracy} provides the   prediction accuracy on target domain samples on the \textit{Office-home} dataset for different confidence ranges. We consider $5$ different confidence probability ranges: $[0-20, 20-40, 40-60, 60-80, 80-100]$. We observe low-confidence predictions offer poor accuracy for the target domain. For example, in cases when the confidence is less than $0.2$, prediction accuracy is below $40\%$. Conversely, for target samples with a predicted confidence greater than $.6$, we observe accuracy of more than $90\%$ on all the three tasks of \textit{Office-home}. This experiments supports our intuition that the amount of high confidence target samples can be used as a proxy for the domain mixing weights $w_k$. We also note the amount of high confidence samples is calculated using the source only models, as adaptation artificially increase confidence across the whole dataset.

\begin{figure}[!ht]
    \centering
    \includegraphics[width=.3\textwidth]{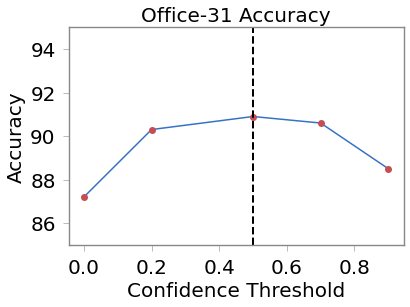}
    \includegraphics[width=.3\textwidth]{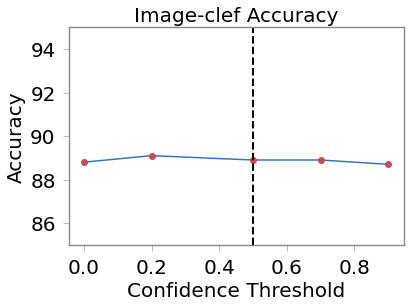}
    \caption{Results on the \textit{Office-31} and \textit{Image-clef} datasets for different values of the confidence parameter $\lambda$. The dotted line corresponds to $\lambda=.5$ used for reporting results in Table \ref{table:main-results}.}
    \label{figure:confidence-investigation}
\end{figure}

We further investigate performance in regards to the $\lambda$ parameter. While in Figure \ref{figure:confidence-investigation} we observe a target accuracy increase correlated to higher levels of classifier confidence, the amount of high confidence samples proportional to dataset size is equally important for an appropriate choice of confidence threshold. Setting the $\lambda$ parameter too high may lead to mixing weights that do not capture model behavior on the whole target distribution, just on a small subset of samples, leading to degraded performance. Conversely, a low value of $\lambda$ will lead to results that are equivalent to uniformly combining predictions. Figure \ref{figure:confidence-investigation} portrays both these behaviors on the \textit{Office-31} and \textit{Image-clef} datasets. We observe our choice of $\lambda=.5$ is able to obtain best performance on the \textit{Office-31} dataset, and close to best performance on the \textit{Image-clef} dataset. We also note the choice of $\lambda$ is relatively robust, as values in the interval $[.2, .7]$ offer similar performance. 

\begin{figure*}[!htb]
    \vspace{-1mm}
    \centering
    \subfloat{
        \includegraphics[width=.25\textwidth]{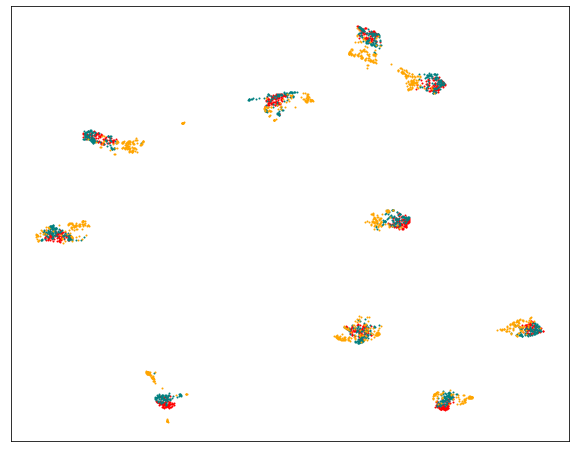}
    }
    \subfloat{
        \includegraphics[width=.25\textwidth]{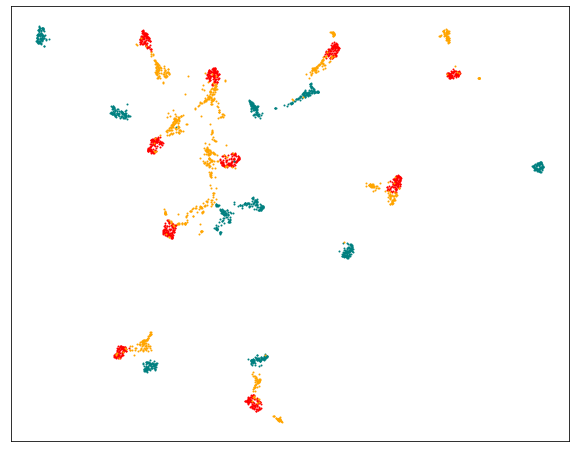}
    }
    \subfloat{
        \includegraphics[width=.395\textwidth]{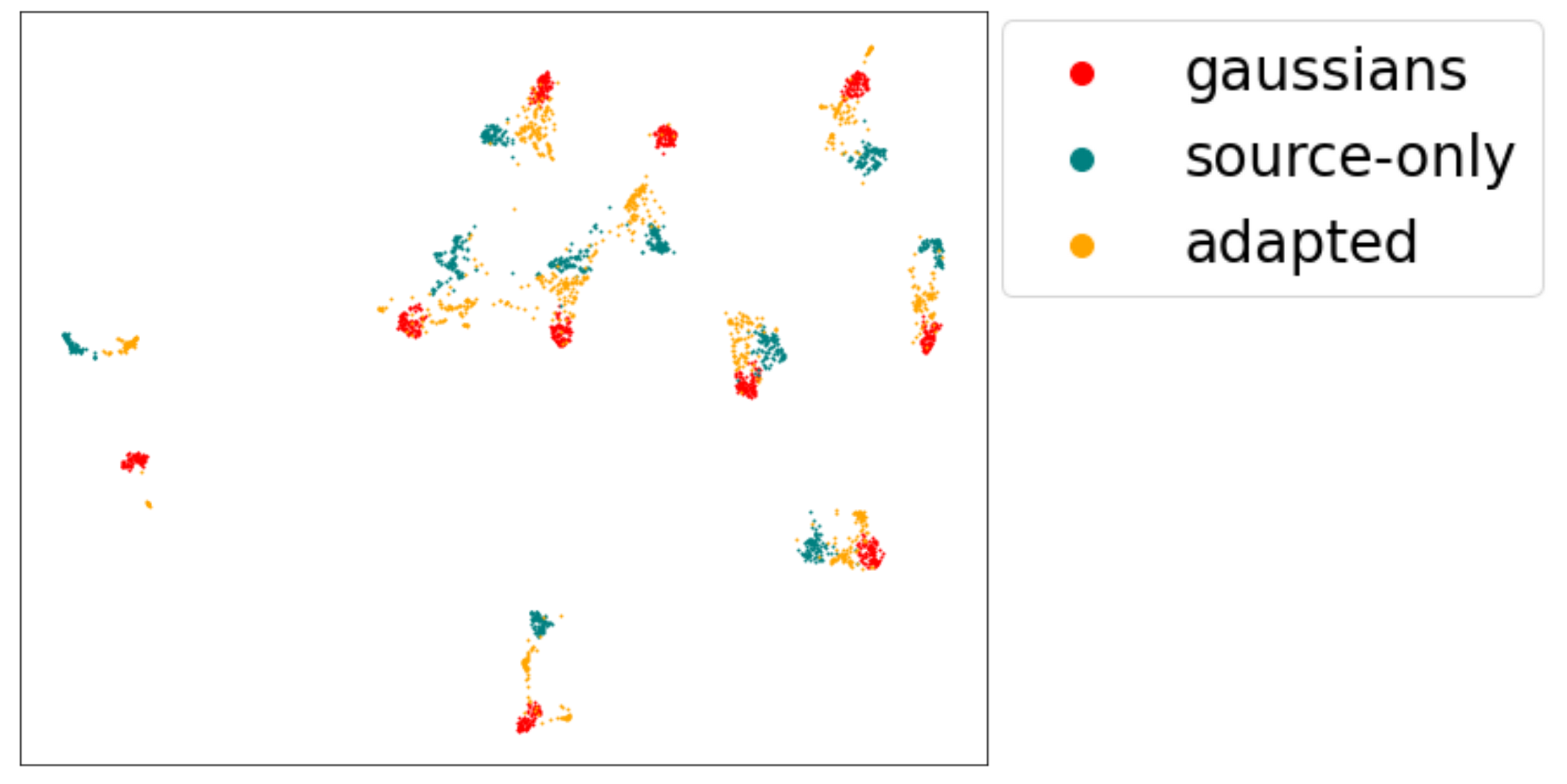}
    }
    \caption{UMAP latent space visualization for \textit{Office-caltech} with \textit{Amazon} as the target. Sources in order: \textit{Caltech, DSLR}, and \textit{Webcam}. Adaptation shifts target embeddings towards the GMM distribution.
    }
    \label{figure:latent-embeddings}
\end{figure*}

% \begin{wrapfigure}{R}{0.5\textwidth}
    % \begin{minipage}{0.45\textwidth}

% \end{minipage}
%  \end{wrapfigure}

Our approach attempts to minimize the distributional distance between target embeddings and GMM estimations of source embeddings. We provide insight into this process in Figure \ref{figure:latent-embeddings}, where we reduce the data representation dimension to two using UMAP (\cite{mcinnes2018umap}). We display GMM samples, target latent embeddings before adaptation, and target latent embeddings post-adaptation. For each source domain, the adaptation process reduces the distance between target domain embeddings (yellow points) and the GMM samples (red points). This empirically validates the theoretical justification for our algorithm. Given classifiers trained on the source domains are able to generalize on the GMM samples as a result of pretraining, we conclude that source-specific domain alignment translates to an improved collective performance.

\section{Conclusion}
We develop a   privacy-preserving MUDA algorithm  based on the assumption that an input distribution is mapped into a multi-modal distribution in an embedding space. We maintain cross-domain privacy    by minimizing the SWD loss between an intermediate GMM distribution and the target domain distribution in the latent embedding. We then combine the source-specific models according to their reliability. We provide theoretical analysis to justify our algorithm. Our experiments   demonstrate that our algorithm performs favorably against SOTA  MUDA algorithms using five UDA benchmarks while preserving privacy. Future direction includes considering setting where the target domain shares different classes with each of   sources. 

%%%%%%%%% REFERENCES
 
\clearpage

\bibliographystyle{tmlr}
\bibliography{camera_ready}

\appendix

\clearpage

\section{Appendix}

\subsection{Proof of Theorem~5.1}

We offer a proof for Theorem $5.1$ from the main paper. Consider the following results.

\begin{theorem}{Theorem~2 from ~\cite{redko2017theoretical}}
    \label{theorem:redko2}
    
    \par Let $h$ be the hypothesis learnt by our model, and $h^*$ the hypothesis that minimizes $e_{\mathcal{S}} + e_{\mathcal{T}}$. Under the assumptions described in our framework, consider the existence of $N$ source samples and $M$ target samples, with empirical source and target distributions $\hat\mu_{\mathcal{S}}$ and $\hat\mu_{\mathcal{T}}$ in $\mathbb{R}^d$. Then, for any $d'>d$ and $\zeta<\sqrt{2}$, there exists a constant number $N_0$ depending on $d'$ such that for any  $\xi>0$ and $\min(N,M)\ge N_0 \max (\xi^{-(d'+2)}, 1)$ with probability at least $1-\xi$, the following holds:
    \begin{equation}
    \begin{split}
        e_{\mathcal{T}}(h) \leq &e_{\mathcal{S}}(h) + W(\hat{\mu}_{\mathcal{T}},\hat{\mu}_{\mathcal{S}}) + \\ &\sqrt{\big(2\log(\frac{1}{\xi})/\zeta\big)}\big(\sqrt{\frac{1}{N}}+\sqrt{\frac{1}{M}}\big) + e_{\mathcal{C}}(h^*)
    \end{split}
    \end{equation}
\end{theorem}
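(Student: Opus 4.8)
The plan is to reconstruct the three-step argument of Redko et al.\ \cite{redko2017theoretical}: first establish a population-level adaptation inequality in which the Wasserstein distance between the \emph{true} latent distributions controls the source--target error gap, then replace the true distributions by their empirical counterparts via a measure-concentration estimate, and finally glue the two together with the triangle inequality for $W$.

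First I would prove the population bound $e_{\mathcal{T}}(h) \le e_{\mathcal{S}}(h) + W(\mu_{\mathcal{T}},\mu_{\mathcal{S}}) + e_{\mathcal{C}}(h^*)$. The key structural hypothesis (part of ``the assumptions described in our framework'') is that for every pair of hypotheses the map $x \mapsto \ell(h(x),h'(x))$ is $1$-Lipschitz with respect to the ground metric defining $W$; Kantorovich--Rubinstein duality then yields $|e_{\mathcal{T}}(h,h') - e_{\mathcal{S}}(h,h')| \le W(\mu_{\mathcal{S}},\mu_{\mathcal{T}})$ for the expected disagreement $e_{\mathcal{D}}(h,h')$. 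I would then chain the standard error-decomposition inequalities. Writing $h^* = \arg\min_h e_{\mathcal{S}}(h)+e_{\mathcal{T}}(h)$ and using the triangle inequality of the loss, $e_{\mathcal{T}}(h) \le e_{\mathcal{T}}(h,h^*) + e_{\mathcal{T}}(h^*) \le e_{\mathcal{S}}(h,h^*) + W(\mu_{\mathcal{S}},\mu_{\mathcal{T}}) + e_{\mathcal{T}}(h^*) \le e_{\mathcal{S}}(h) + W(\mu_{\mathcal{S}},\mu_{\mathcal{T}}) + (e_{\mathcal{S}}(h^*)+e_{\mathcal{T}}(h^*))$, and the bracketed term is exactly the combined error $e_{\mathcal{C}}(h^*)$.

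Second I would convert the true Wasserstein distance into the empirical one. I would invoke the concentration result of Bolley, Guillin and Villani for the empirical measure of a distribution on $\mathbb{R}^d$ with a finite exponential moment: for any $d' > d$ there is a constant $N_0$ so that for $N \ge N_0 \max(\xi^{-(d'+2)},1)$ one has $W(\mu,\hat\mu) \le \sqrt{2\log(1/\xi)/\zeta}\,\sqrt{1/N}$ with probability at least $1-\xi$, where $\zeta$ plays the role of the transport/sub-Gaussian constant and the constraint $\zeta<\sqrt 2$ is inherited from that estimate. This is precisely where the dimension-dependent sample-complexity condition $\min(N,M)\ge N_0\max(\xi^{-(d'+2)},1)$ enters. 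Applying this to the source measure (with $N$ samples) and to the target measure (with $M$ samples), and using $W(\mu_{\mathcal{S}},\mu_{\mathcal{T}}) \le W(\mu_{\mathcal{S}},\hat\mu_{\mathcal{S}}) + W(\hat\mu_{\mathcal{S}},\hat\mu_{\mathcal{T}}) + W(\hat\mu_{\mathcal{T}},\mu_{\mathcal{T}})$, produces the two-term rate $\sqrt{2\log(1/\xi)/\zeta}\,(\sqrt{1/N}+\sqrt{1/M})$ wrapped around the empirical distance $W(\hat\mu_{\mathcal{T}},\hat\mu_{\mathcal{S}})$. Substituting into the population bound gives exactly the claimed inequality.

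The hard part will be the probabilistic bookkeeping in the second step rather than the (essentially algebraic) first step. The two concentration events---one for the source empirical measure, one for the target---must hold simultaneously, so strictly one needs a union bound that degrades $1-\xi$ to $1-2\xi$ (or one re-parametrises $\xi$), and one must verify that the finite-exponential-moment hypothesis required by Bolley--Guillin--Villani is legitimately inherited by the pushforward latent distributions under $f\circ g$ rather than simply assumed. I would therefore state these regularity conditions explicitly as hypotheses of the theorem and otherwise quote the Bolley--Guillin--Villani rate as a black box, since re-deriving its covering-number estimate is standard and orthogonal to the adaptation argument.
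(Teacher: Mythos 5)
This statement is never proved in the paper: it is imported verbatim as Theorem~2 of \cite{redko2017theoretical} and used strictly as a black box, the paper's own proof work in the appendix being confined to two lemmas (a Wasserstein triangle inequality and a Jensen-type bound for the weighted ensemble of source hypotheses) that convert the cited bound into the multi-source Theorem~5.1. So there is no internal proof to compare yours against; what your proposal actually does is reconstruct the proof of the cited result, and it does so along essentially the same lines as Redko et al.'s original argument: (i) the population-level bound $e_{\mathcal{T}}(h) \le e_{\mathcal{S}}(h) + W(\mu_{\mathcal{S}},\mu_{\mathcal{T}}) + e_{\mathcal{C}}(h^*)$ via Kantorovich--Rubinstein duality (the Lipschitzness of $x \mapsto \ell(h(x),h'(x))$ that you flag as a structural hypothesis is, in Redko et al., supplied by taking hypotheses in the unit ball of an RKHS) chained with the standard decomposition through the joint minimizer $h^*$; (ii) the Bolley--Guillin--Villani concentration estimate, which is exactly where the sample-size condition $\min(N,M)\ge N_0\max(\xi^{-(d'+2)},1)$, the constraint $\zeta<\sqrt{2}$, and the rate $\sqrt{1/N}+\sqrt{1/M}$ originate; (iii) the triangle inequality to pass between true and empirical measures. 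That is the correct provenance of every term in the statement, so your plan is sound. The two caveats you raise yourself are genuine and are glossed over both by this paper and by the original source: the source and target concentration events must hold simultaneously, so strictly one needs a union bound (confidence $1-2\xi$, or each event run at level $\xi/2$), and the Lipschitz and finite-exponential-moment hypotheses are not free---they are the silent content of the phrase ``under the assumptions described in our framework'' and should be stated explicitly, as you propose; note also that nothing in this paper verifies that its latent pushforward distributions satisfy those moment conditions, which is a gap in how the paper \emph{uses} the theorem rather than in your reconstruction of it.
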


The above theorem provides an upper bound on the target error with respect to the source error, the distance between source and target domains, a term that is minimized based on the number of samples, and a constant $e_{\mathcal{C}} = e_{\mathcal{S}}(h^*) + e_{\mathcal{T}}(h^*)$ describing the performance of an optimal hypothesis on the present set of samples. 

We adapt the result in Theorem~\ref{theorem:redko2} to provide an upper bound in our multi-source setting. Consider the following two results. 

\begin{lemma} Under the definitions of Theorem~\ref{theorem:redko2}
    \label{lemma:w2triangle}

    \begin{equation}
        W(\hat{\mu}_{\mathcal{S}},\hat{\mu}_{\mathcal{T}}) \leq W(\hat{\mu}_{\mathcal{S}},\hat{\mu}_{\mathcal{P}}) + W(\hat{\mu}_{\mathcal{P}},\hat{\mu}_{\mathcal{T}})
    \end{equation}
    
    where $\hat{\mu}_{\mathcal{P}}$ is the GMM distribution learnt for source domain $\mathcal{S}$. 
\end{lemma}

\begin{proof}
    As $W$ is a distance metric, the proof is an immediate application of the triangle inequality.
\end{proof}

\begin{lemma} 
    \label{lemma:upper-bound}

    Let $h$ be the hypothesis describing the multi-source model, and let $h_k$ be the hypothesis learnt for a source domain $k$. If $e_\mathcal{T}(h)$ is the error function for hypothesis $h$ on domain $\mathcal{T}$, then

    % Let $p = \sum_{k=1}^n w_k f_k(X)$ be the probabilistic estimate returned by our model for some input $X$, and let $y$ be the label associated with this input. Then
    
    \begin{align}
        e_{\mathcal{T}}(h) \leq \sum_{k=1}^n w_k e_\mathcal{T} (h_k)
        % \mathcal{L}_{ce} (p, \mathbb{1}_y) \leq \sum_{k=1}^n w_k \mathcal{L}_{ce} (f_k(X), \mathbb{1}_y) \label{eq1}
    \end{align}
\end{lemma}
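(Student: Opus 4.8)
The plan is to exploit the single structural fact that makes this lemma true: the multi-source hypothesis $h$ is by construction a convex combination of the domain-specific hypotheses. Recall from the problem formulation that for a target input $x$ the model outputs $h(x)=\sum_{k=1}^n w_k\, h_k(x)$, where each $h_k(x)=f_{\theta_k^A}(x)$ is a probability vector over $\mathcal{Y}$ and the weights satisfy $w_k>0$, $\sum_{k=1}^n w_k=1$. Writing the true error as an expectation of a per-sample loss, $e_{\mathcal{T}}(h)=\mathbb{E}_{(x,y)\sim\mathcal{T}}\big[\ell(h(x),y)\big]$, the entire argument reduces to pushing the convex combination through the loss and then through the expectation.

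The key step is a pointwise application of Jensen's inequality. First I would invoke convexity of the loss $\ell(\cdot,y)$ in its first (prediction) argument; for the cross-entropy loss used throughout the paper this is immediate, since for a fixed label $y$ one has $\ell(p,y)=-\log p_y$ and $-\log$ is convex, so
\begin{equation}
    \ell\Big(\sum_{k=1}^n w_k h_k(x),\,y\Big)
    \;\leq\; \sum_{k=1}^n w_k\,\ell\big(h_k(x),y\big)
\end{equation}
holds at every point $(x,y)$ because the $w_k$ form a convex combination. Taking the expectation of both sides over $(x,y)\sim\mathcal{T}$ and using linearity of expectation to move the deterministic weights outside yields
\begin{equation}
    e_{\mathcal{T}}(h)
    = \mathbb{E}_{\mathcal{T}}\Big[\ell\big(h(x),y\big)\Big]
    \;\leq\; \sum_{k=1}^n w_k\,\mathbb{E}_{\mathcal{T}}\big[\ell(h_k(x),y)\big]
    = \sum_{k=1}^n w_k\, e_{\mathcal{T}}(h_k),
\end{equation}
which is exactly the claimed bound.

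The main obstacle, and essentially the only place where care is needed, is justifying the convexity of $\ell(\cdot,y)$ that licenses the Jensen step; everything else is bookkeeping. I would therefore state explicitly that the result applies to any loss convex in the prediction vector and verify this for the cross-entropy loss employed in Algorithm~\ref{algorithm}. The positivity and normalization of the mixing weights $w_k$ are needed to make $\sum_k w_k h_k(x)$ a genuine convex combination (and to keep the bound a weighted average rather than an arbitrary linear combination), so I would flag those hypotheses as the precise conditions under which the inequality is tight-able. This lemma then feeds directly into the proof of Theorem~\ref{theorem:main}: combining it with Theorem~\ref{theorem:redko2} applied to each $e_{\mathcal{T}}(h_k)$ and with the triangle inequality of Lemma~\ref{lemma:w2triangle} to split $W(\hat\mu_{\mathcal{T}},\hat\mu_{\mathcal{S}_k})$ produces the weighted multi-source bound.
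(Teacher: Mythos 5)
Your proof is correct and follows exactly the same route as the paper's: both express the multi-source prediction as the convex combination $\sum_k w_k h_k(x)$, apply Jensen's inequality pointwise using the convexity of $-\log$ in the cross-entropy loss, and then pass the weights through the expectation by linearity. Your added remark that the argument generalizes to any loss convex in the prediction vector is a mild (and accurate) strengthening, but the core argument is identical to the paper's.
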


\begin{proof}
    Let $p(X) = \sum_{k=1}^n w_k f_k(X)$ with $\sum w_k=1, w_k > 0$ be the probabilistic estimate returned by our model for some input $X$, and let $y$ be the label associated with this input. The proof for the Lemma proceeds as follows

\begin{align*}
    e_\mathcal{T} (h) &= \mathbb{E}_{(X,y) \sim \mathcal{T}} \mathcal{L}_{ce} (p(X), \mathbb{1}_y)  \\
        &= \mathbb{E}_{(X,y) \sim \mathcal{T}} -\log p(X)[y] \\
        &= \mathbb{E}_{(X,y) \sim \mathcal{T}} -\log (\sum_{k=1}^n w_k f_k(X)[y]) \\
        &\leq \mathbb{E}_{(X,y) \sim \mathcal{T}} \sum_{k=1}^n w_k (-\log f_k(X)[y]) \text{ Jensen's Ineq.} \\
        &= \sum_{k=1}^n w_k \mathbb{E}_{(X,y) \sim \mathcal{T}} \mathcal{L}_{ce} (f_k(x), \mathbb{1}_y) \\
        &= \sum_{k=1}^n w_k e_\mathcal{T}(h_k)
\end{align*}
\end{proof}

We now extend Theorem~\ref{theorem:redko2} as follows

\begin{theorem}{Multi-Source unsupervised error bound (Theorem 5.1 from the main paper)}
    \par Under the assumptions of our framework and using the definitions from Theorem~\ref{theorem:redko2}
    
    \begin{equation}
        \begin{split}
            e_{\mathcal{T}}(h) \leq \sum_{k=1}^n w_k (e_{\mathcal{S}_k}(h_k) + W(\hat{\mu}_{\mathcal{T}},\hat{\mu}_{\mathcal{P}_k}) + W(\hat{\mu}_{\mathcal{P}_k},\hat{\mu}_{\mathcal{S}_k}) + \\
            \sqrt{\big(2\log(\frac{1}{\xi})/\zeta\big)}\big(\sqrt{\frac{1}{N_k}}+\sqrt{\frac{1}{M}}\big) + e_{\mathcal{C}_k}(h_k^*))
        \end{split}
    \end{equation}
    
    where $\mathcal{P}_k$ is the sample GMM distribution learnt for source domain $k$, $N_K$ is the sample size of domain $k$, $\mathcal{C}_k$ is the combined error loss with respect to domain $k$, and $h^*_k$ is the optimal model with respect to this loss. 
\end{theorem}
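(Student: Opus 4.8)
The plan is to assemble the bound directly from the three ingredients already established: the convex-combination error bound (Lemma~\ref{lemma:upper-bound}), the single-source transfer bound (Theorem~\ref{theorem:redko2}), and the Wasserstein triangle inequality (Lemma~\ref{lemma:w2triangle}). The overall structure is a chain of inequalities, so there is no single hard computation; the work is in sequencing the three results correctly and keeping track of the per-domain quantities.

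First I would invoke Lemma~\ref{lemma:upper-bound} to peel the multi-source hypothesis apart, giving the starting point
\begin{equation*}
    e_{\mathcal{T}}(h) \leq \sum_{k=1}^n w_k \, e_{\mathcal{T}}(h_k).
\end{equation*}
Next, for each fixed $k$ I would treat $\mathcal{S}_k$ as the single source domain and apply Theorem~\ref{theorem:redko2} to the domain-specific hypothesis $h_k$, which bounds $e_{\mathcal{T}}(h_k)$ by $e_{\mathcal{S}_k}(h_k) + W(\hat{\mu}_{\mathcal{T}},\hat{\mu}_{\mathcal{S}_k})$ plus the sample-complexity term and the combined-error constant $e_{\mathcal{C}_k}(h_k^*)$. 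Finally I would apply Lemma~\ref{lemma:w2triangle} with $\mathcal{P}_k$ as the intermediate prototypical distribution to replace $W(\hat{\mu}_{\mathcal{T}},\hat{\mu}_{\mathcal{S}_k})$ by the two-leg bound $W(\hat{\mu}_{\mathcal{T}},\hat{\mu}_{\mathcal{P}_k}) + W(\hat{\mu}_{\mathcal{P}_k},\hat{\mu}_{\mathcal{S}_k})$. Substituting this per-domain bound back into the weighted sum and using $\sum_k w_k = 1$ with $w_k > 0$ yields exactly the claimed inequality.

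The step I expect to require the most care is the probabilistic bookkeeping rather than any algebra. Theorem~\ref{theorem:redko2} is a high-probability statement: each single-source application holds with probability at least $1-\xi$. Since I apply it once per source domain, the honest conclusion holds simultaneously across all $n$ domains only on the intersection of these events, which a union bound guarantees with probability at least $1 - n\xi$. To recover a clean $1-\xi$ guarantee one should either rescale the per-domain failure probability to $\xi/n$ (which inflates the $\log(1/\xi)$ factor to $\log(n/\xi)$) or state the result with an $n\xi$ failure probability. I would flag this explicitly so the sample-complexity term is interpreted correctly, since the bound as written suppresses the dependence on $n$ in the confidence level. Apart from this subtlety, the argument is a routine concatenation of the two lemmas and the single-source theorem.
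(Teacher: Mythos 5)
Your proposal is correct and follows exactly the same three-step chain as the paper's own proof: Lemma~\ref{lemma:upper-bound} to decompose the multi-source error into the weighted sum $\sum_k w_k e_{\mathcal{T}}(h_k)$, Theorem~\ref{theorem:redko2} applied per source domain, and Lemma~\ref{lemma:w2triangle} to insert the prototypical distribution $\hat{\mu}_{\mathcal{P}_k}$. Your observation that the $n$ per-domain high-probability applications require a union bound (so the stated confidence should be $1-n\xi$, or $\xi$ rescaled to $\xi/n$ inside the logarithm) is a genuine subtlety that the paper's proof silently ignores, so your version is if anything more careful than the original.
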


\begin{proof}
    
    \begin{align*}
        e_{\mathcal{T}}(h) \leq &\sum_{k=1}^n w_k e_\mathcal{T} (h_k) \text{ From Lemma~\ref{lemma:upper-bound}} \\
            \leq &\sum_{k=1}^n w_k (e_{\mathcal{S}_k}(h_k) + W(\hat{\mu}_{\mathcal{T}},\hat{\mu}_{\mathcal{S}_k}) + \\
            &\sqrt{\big(2\log(\frac{1}{\xi})/\zeta\big)}\big(\sqrt{\frac{1}{N_k}}+\sqrt{\frac{1}{M}}\big) + e_{\mathcal{C}_k}(h_k^*)) \text{ by Theorem~\ref{theorem:redko2}}\\
            \leq &\sum_{k=1}^n w_k (e_{\mathcal{S}_k}(h_k) + W(\hat{\mu}_{\mathcal{T}},\hat{\mu}_{\mathcal{P}_k}) + W(\hat{\mu}_{\mathcal{P}_k},\hat{\mu}_{\mathcal{S}_k}) + \\
            &\sqrt{\big(2\log(\frac{1}{\xi})/\zeta\big)}\big(\sqrt{\frac{1}{N_k}}+\sqrt{\frac{1}{M}}\big) + e_{\mathcal{C}_k}(h_k^*)) \text{ by Lemma~\ref{lemma:w2triangle}}
    \end{align*}
    
\end{proof}

% \subsection{Sliced Wasserstein Distance}

% As mentioned in the main body of the manuscript, the Sliced Wasserstein Distance is an approximation of optimal transport. Following the results in \cite{rabin2011wasserstein}, the SWD acts as an estimate for the quadratic Wasserstein Distance (WD) between two distributions, by aggregating the tractable $1-$dimensional WD of $L$ projections onto the unit hypersphere. In the context of our algorithm, the discrepancy measure $D(\cdot, \cdot)$ can be written in the form of SWD as follows:

% \begin{equation}
%     \label{eq:w2_align}
%     
%     \begin{split}
%         D(g(\mathcal{T}), A_k) = &\frac{1}{L} \sum_{l=1}^{L} |\langle g(X^t_{i_l}), \gamma_l \rangle - \langle X^a_{j_l}, \gamma_l \rangle |^2 \approx W_2(g(\mathcal{T}), A_k)    
%     \end{split}
% \end{equation}

\subsection{Experimental parameters}

We use the Adam optimizer with source learning rate of $1e-5$ for each source domain for all datasets. Target learning rates are chosen between $1e-5$ and $1e-7$ for adaptation. The number of training iterations and adaptation iterations differs per dataset: Office-31 (12k, 48k), Domain-net (80k, 160k), Image-clef (4k, 3k), Office-home (40k, 10k), Office-CalTech (4k, 6k). The training batch size is either 16 or 32, with little difference observed between the two. The adaptation batch size is usually chosen around $10 \times$ the number of classes for each dataset, to ensure a good class representation when minimizing the SWD distance. The network size is the same across all datasets, with the SWD minimization space being 256 dimensional. The above mentioned parameters are also provided in the \textit{config.py} file in the codebase.

\subsection{Additional Results}

\par We extend the runtime results from \textit{Section 6.3} of the main paper to the \textit{Domain-Net} dataset. As seen in Figure \ref{figure:office-home-accuracy}, the result in Figure \ref{figure:domain-net-accuracy} share a similar trend. After the start of the adaptation process target accuracy improves for each source trained model. Additionally, pooling information from each of the five source domains leads to improved overall predictive quality of the model. 

\begin{figure}[!htb]
    \centering
    \subfloat{
        \includegraphics[width=.31\textwidth]{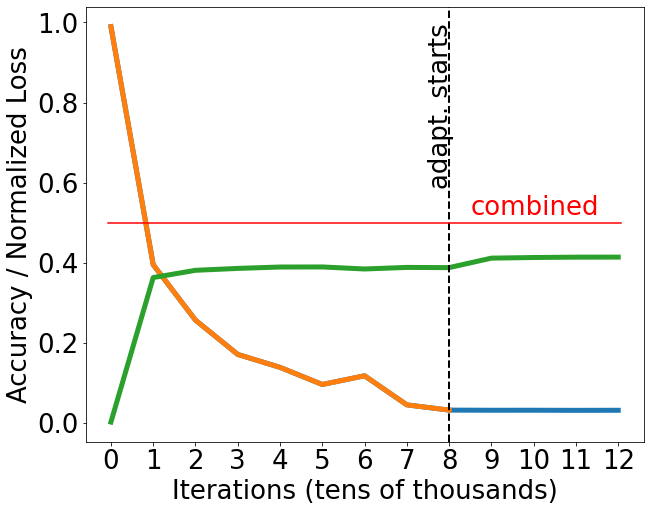}
    }
    \subfloat{
        \includegraphics[width=.31\textwidth]{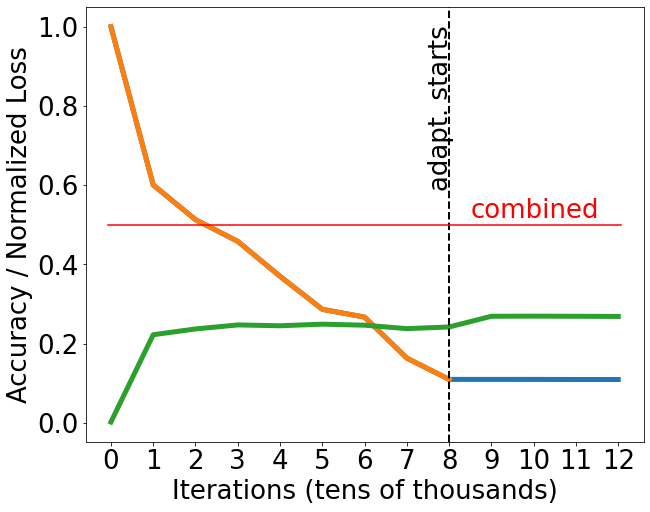}
    }
    \subfloat{
        \includegraphics[width=.31\textwidth]{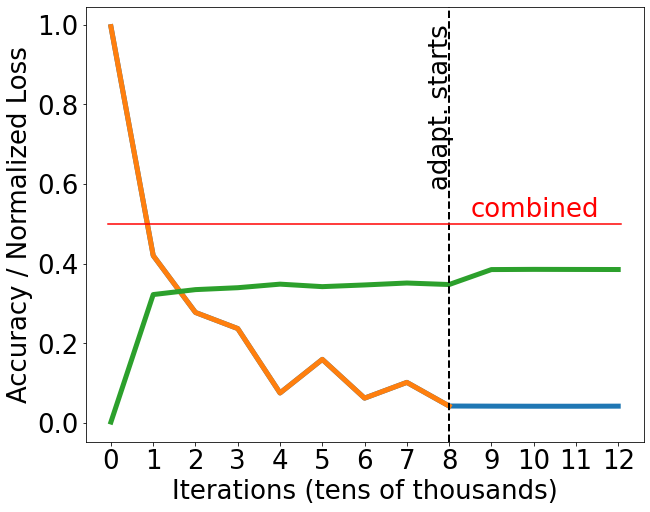}
    } \\
    \subfloat{
        \includegraphics[width=.31\textwidth]{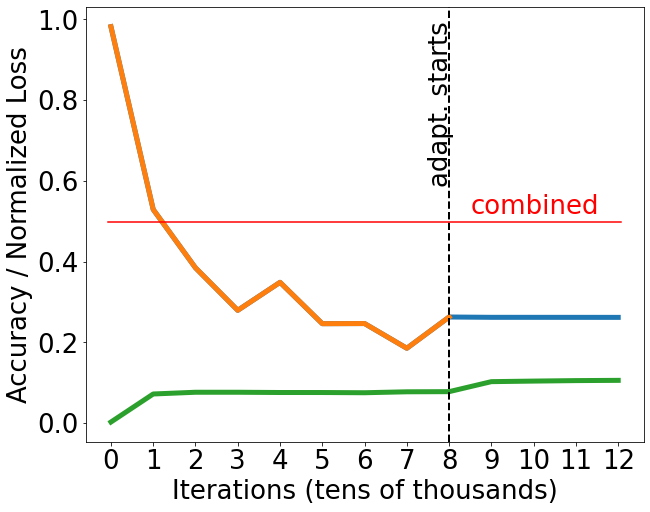}
    }
    \subfloat{
        \includegraphics[width=.46\textwidth]{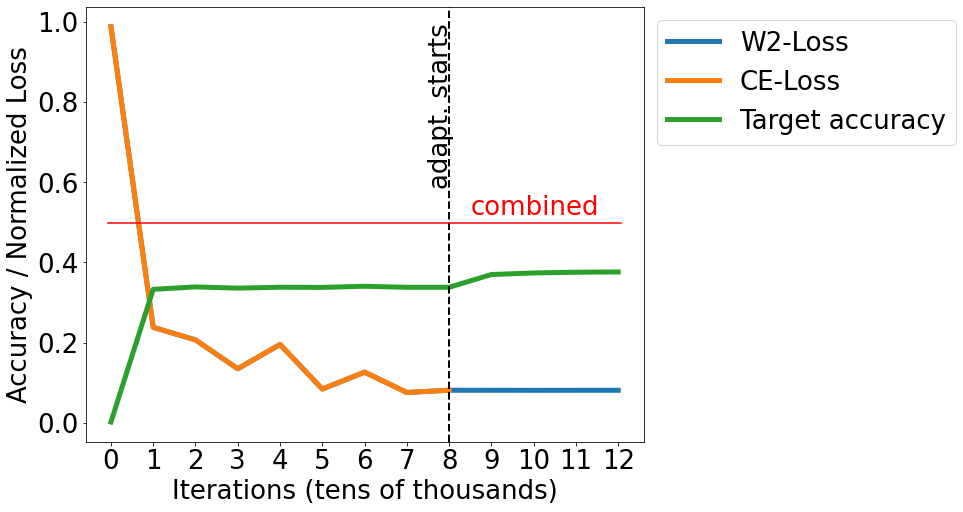}
    }
    \caption{Effect of the adaptation process on the \textit{Domain-Net} dataset, where sketch is the target. Sources are in order \textit{Clipart, Infograph, Painting, Quickdraw, Real}. }
    \label{figure:domain-net-accuracy}
\end{figure}

\par In Table \ref{table:gamma-analysis} we report results for different choices of the $\gamma$ parameter on the Office-31 dataset. The main results in Table \ref{table:main-results} are generated for the choice of $\gamma=.02$ . We test a wider range of $\gamma$ values to identify how robust this parameter needs to be. We observe large values of $\gamma$ put too much emphasis on the entropy loss, harming the optimal transport based distribution matching. Small values of $\gamma$ share a similar problem, as we lose the soft-clustering benefit provided by the entropy loss. Overall we notice different tasks have different high performing $\gamma$ ranges . For example, on the $\rightarrow D$ task, any choice of the parameter greater than $0.02$ leads to best performance, while on the $\rightarrow A$ task $\gamma$ less than $0.04$ offer a competitive range of values. While setting task specific $\gamma$ values may lead to improved performance, we show that the more robust setting where we choose the same $\gamma$ per dataset still works reasonably well in practice.
    
    \begin{table*}[!htb]
        \centering
        \adjustbox{valign=t, max width=.5\textwidth}{
            \begin{tabular}{ |c|ccc|c| }
                \hline
                \textbf{Method} & \textbf{$\rightarrow$D} & \textbf{$\rightarrow$W} & \textbf{$\rightarrow$A} & \textbf{Avg.} \\
                \hline
                $\gamma$=1 &\textbf{99.8} &97.6 &64.8 &87.4 \\
                \hline
                $\gamma$=.04 &\textbf{99.8} &\textbf{98.7} &72.8 &90.4  \\
                $\gamma$=.03 &\textbf{99.8} &98.4 &74.2 &90.8\\
                $\gamma$=.02 &\textbf{99.8} &98.5 &75.4 &\textbf{91.2} \\
                $\gamma$=.01 &98.8 &97.8 &\textbf{76.3} &91 \\
                \hline
                $\gamma$=1e-3 &89.3 &92.4 &74.4 &85.4 \\
                $\gamma$=1e-4 &87.4 &90.7 &74.2 &84.1 \\
                \hline
            \end{tabular}
        }
        \caption{Performance analysis for different values of $\gamma$}
        \label{table:gamma-analysis}
    \end{table*}
    
    \par We give justification for the beneficial effect of using all the available source domains for inference. In Table \ref{table:adding-source-domains} we present results obtained from successively adding source domains to our ensemble, for the Office-home dataset. We also include single best performance as a baseline, representing the highest target accuracy obtained across source domains. The rows of the table correspond to the four MUDA problems considered for Office-home, while the columns correspond to the number of source domains considered in ensembling. For example, the $ACP\rightarrow R$ task with 2 sources considers the problem $AC\rightarrow R$. Similar to Figures \ref{figure:office-home-accuracy} and \ref{figure:domain-net-accuracy}, we observe that ensembling is superior to single best performance on all tasks.  Additionally, our mixing strategy proves to be robust with respect to negative transfer, as adding new domains is always beneficial to the reported performance.
        
        \begin{table*}[!htb]
            \centering
            \adjustbox{valign=t, max width=.7\textwidth}{
                \begin{tabular}{ |c|c|ccc| }
                    \hline
                    \textbf{Method} & Single best & First src. domain & First 2 src. domains & All 3 src. domains \\
                    \hline
                    $ACP\rightarrow R$ &81.1 &81.2 &82.3 &\textbf{83.7} \\
                    $ACR\rightarrow P$ &83.1 &77.1 &77.9 &\textbf{83.3} \\
                    $APR\rightarrow C$ &59.2 &58.3 &60.1 &\textbf{60.9} \\
                    $PCR\rightarrow A$ &67.2 &67.3 &67.6 &\textbf{68.2} \\
                    \hline
                \end{tabular}
            }
            \caption{Performance analysis when source domains are introduced sequentially.}
            \label{table:adding-source-domains}
        \end{table*}

    We investigate the representation quality of the GMM distribution as a surrogate for the source distribution. Note that   having   GMMs that are  good approximation of the source latent features is crucial for our approach. In Figure \ref{figure:latent-embeddings-source-gaussians}, we present visualized data representations for the estimated GMMs and the source domain distributions for the \textit{Image-clef} dataset. We note that for both source domains, their latent space distributions after pretraining are multi-modal distributions with 12 modes, each corresponding to one   class. This observation confirms that we can approximate the source domain distribution with a GMM with 12 modes.
    We also note that for both source domains the estimated GMM distribution offers a close approximation of the original source distribution. This experiment empirically validates that the third term in Eq.~4 is small in practice and we can use these as intermediate cross-domain distributions.
    
    \begin{figure}[!htb]
        \centering
            \includegraphics[width=.327\textwidth]{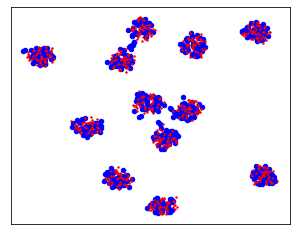}
            \includegraphics[width=.6\textwidth]{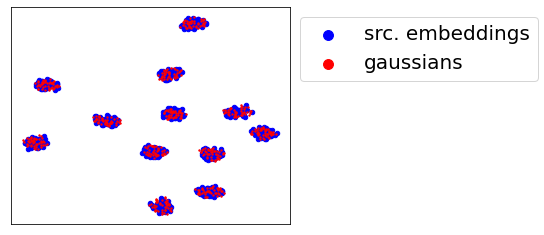}
        \caption{Source and GMM embeddings for the \textit{Image-clef} dataset with \textit{Pascal} and \textit{Caltech} as sources. For both datasets, the GMM samples closely approximate the source embeddings.}
        \label{figure:latent-embeddings-source-gaussians}
    \end{figure}

    We additionally analyze the latent source approximation used for adaptation, and compare it to the scenario when fine tuning the model on the source domain can be done after adaptation starts. We present these findings in Figure \ref{figure:shift-in-latent}. When privacy is not a concern and we have access to all source data, there is no need to approximate the source distribution during adaptation, as we can just directly use the source samples. For empirical exploration, results on how the latent distribution changes when source access is permitted during adaptation (corresponding to the SS + SW case in Table \ref{table:share-source-data}) is presented in Figure \ref{figure:shift-in-latent}. We notice that as training progresses, the means slightly shift. Visually however, the change compared to using the latent space only after source training is negligible. This supports the idea that the GMMs learned at the end of source training will provide a sufficient approximation of the source distribution.
    
    \begin{figure*}[!ht]
        \centering
        \includegraphics[width=.4\textwidth]{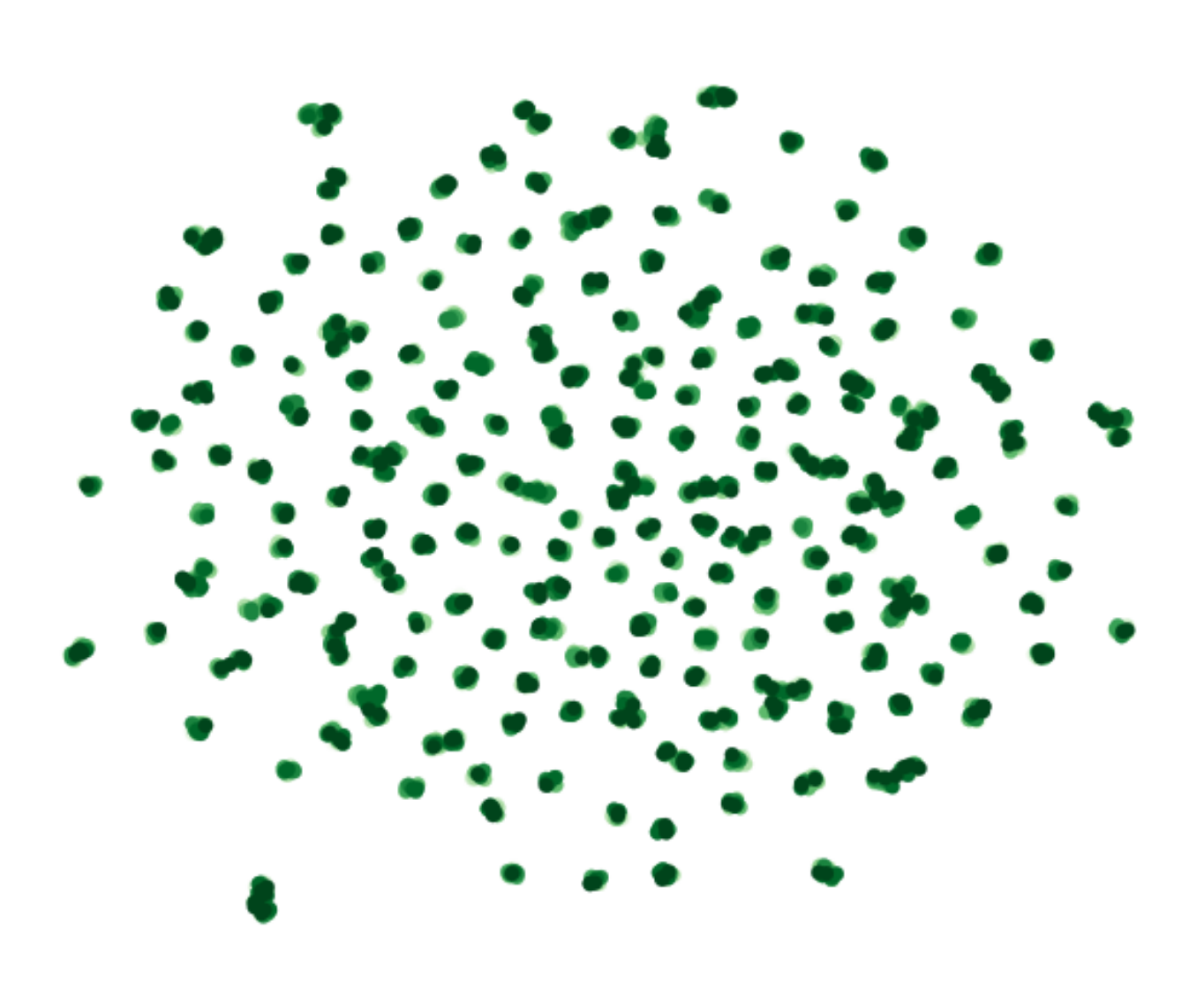}
        \includegraphics[width=.4\textwidth]{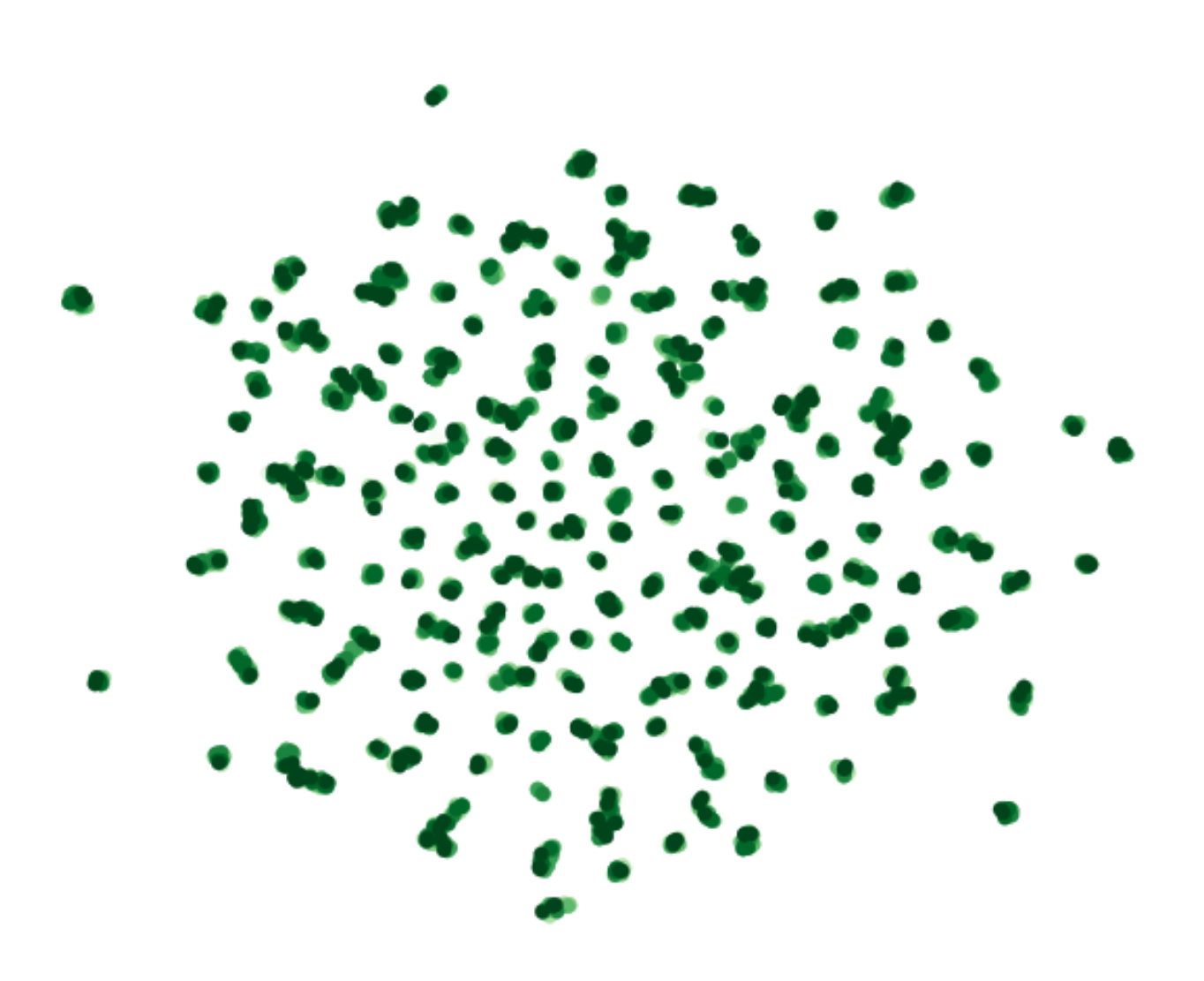}
        \caption{Latent distributions for two domains of the Office-31 dataset when considering the $D\rightarrow A$ and $W\rightarrow A$ tasks. Each color gradient represents a latent feature distribution snapshot of the source domains. Darker colors correspond to later training iterations, lighter colors to earlier iterations.}
        \label{figure:shift-in-latent}
    \end{figure*}

\end{document}